\documentclass[10pt,twocolumn,letterpaper]{article}

\usepackage[letterpaper,margin=0.75in,columnsep=0.25in]{geometry}
\usepackage{times}
\usepackage[round,authoryear]{natbib}
\usepackage{amsmath}
\usepackage{amssymb}
\usepackage{amsthm}
\usepackage[table]{xcolor}
\usepackage{graphicx}
\usepackage{subcaption}
\usepackage{booktabs}
\usepackage{multirow}
\usepackage{float}
\usepackage{pifont}
\usepackage{url}
\usepackage{hyperref}
\hypersetup{colorlinks=true,linkcolor=blue!60!black,citecolor=blue!60!black,urlcolor=blue!60!black}

\newcommand{\E}{\mathbb{E}}
\newcommand{\Cov}{\mathrm{Cov}}
\newcommand{\KL}{D_{\mathrm{KL}}}

\newcommand{\cmark}{\ding{51}}
\newcommand{\xmark}{\ding{55}}

\newtheorem{proposition}{Proposition}[section]
\newtheorem{lemma}[proposition]{Lemma}
\newtheorem{corollary}[proposition]{Corollary}
\theoremstyle{remark}
\newtheorem{remark}[proposition]{Remark}

\newcommand{\tr}{\operatorname{tr}}

\newcommand{\logit}{\operatorname{logit}}
\newcommand{\KLBer}{\operatorname{KL}_{\mathrm{Ber}}}
\newcommand{\pS}{p_{S}}
\newcommand{\pbar}{\bar p}
\newcommand{\phat}{\widehat p}
\newcommand{\ghat}{\widehat g}
\newcommand{\Shat}{\widehat S}
\newcommand{\Vhat}{\widehat V}
\newcommand{\Bhat}{\widehat B}
\newcommand{\what}{\widehat w}
\newcommand{\simplex}{\Delta_{\mathcal V}}

\title{\Large\bfseries MASS: Multi-mask Selective Self-Distillation for Leakage-Resilient Reasoning}

\author{%
Xiuqi Liu\textsuperscript{1}\qquad
Oveys Delafrooz Noroozi\textsuperscript{2}\qquad
Upamanyu Madhow\textsuperscript{2} \\[0.3em]
\textsuperscript{1}Chongqing Bashu Secondary School\qquad
\textsuperscript{2}University of California, Santa Barbara \\[0.2em]
\texttt{lxq20080805@gmail.com},\quad \texttt{\{oveys, madhow\}@ucsb.edu}
}
\date{}

\begin{document}

\maketitle

\begin{abstract}
On-policy self-distillation enables the same model to act as student and teacher, with dense token-level supervision obtained by the teacher being given access to privileged information. 
However, conditioning the teacher on a single reference can concentrate its predictions on one reasoning trajectory and limit transfer to the student. We propose Multi-Mask Selective Self-Distillation (MASS), which approximates the unavailable marginal over diverse reference realizations using multiple independently masked views of one verified solution. MASS aggregates these complementary teacher predictions, selectively transfers reliable token-level corrections, and adaptively focuses masking on influential reference spans. Across mathematical reasoning benchmarks and model families, MASS consistently outperforms strong self-distillation baselines.
\end{abstract}

\section{Introduction}
Large language models (LLMs) demonstrate strong reasoning capabilities through post training approaches such as Reinforcement Learning with Verifiable Rewards \citep{shao2024deepseekmath, guo2025deepseek,ouyang2022training} (RLVR) and Supervised Fine Tuning (SFT) \citep{zelikman2022star}. However, the substantial computational and deployment costs of large models have motivated growing interest in transferring their capabilities to smaller, more efficient models through knowledge distillation \citep{wang2020minilm, hsieh2023distilling, hinton2015distilling}. Conventional off-policy distillation, while effective, suffers from distribution mismatch between teacher demonstrations and student trajectories\citep{ross2011reduction}. On-policy distillation (OPD) \citep{agarwal2024policy} addresses this mismatch by training the student on its own rollouts while obtaining dense token-level supervision from a teacher. On-policy self-distillation (OPSD) \citep{zhao2026self} further removes the need for a separate large teacher: the same model plays both roles, with the teacher distinguished by access to privileged information that is unavailable to the student at inference time. This framework has since been extended to privileged contexts such as rubrics, visual inputs, and long-context information \citep{ye2026policy,zhang2026opsdl,bablani2026rubrics}.

Recent studies have identified several consequences of conditioning a self-teacher on privileged information, including performance collapse, unstable optimization, trajectory-specific preferences, and shifts toward overly succinct reasoning behavior \citep{yang2026self,harne2026privileged,pan2026rlcsd,kim2026does}. Although these effects arise through different mechanisms, they expose a central risk: the teacher may become overly confident in the particular reference it is given, concentrating its supervision on one reasoning trajectory while underrepresenting other continuations that remain plausible from the student's information state.

To address the problem, existing approaches improve the transferability of privileged supervision by anchoring the update with verifier feedback, contrasting different privileged contexts, or reshaping and reweighting the teacher signal \citep{yang2026self,pan2026rlcsd,li2026demopsd,xu2026betaopsdderivingpolicyoptimization,shen2026anti}. However, these methods primarily intervene only after the privileged distribution has been produced by teacher. Since reference-specific bias originates from conditioning the teacher on a complete solution, such post-hoc adjustments may not fully address its source. This motivates intervening at the level of the teacher's conditioning context, thereby shaping the supervision distribution before it is transferred to the student.  A potential solution, suggested by \citet{yang2026self}, is to average teacher predictions across multiple valid solutions, thereby reducing the influence of any single reasoning trajectory. In practice, however, obtaining multiple verified solution for each question is very expensive. 

\noindent
{\bf Our approach:} We generate diverse conditioning contexts for teacher from a single reference by independently masking different spans of the reference, so that their averaged prediction retains useful guidance without being dominated by one complete trajectory. Our proposed method,
\textbf{Multi-Mask Selective Self-Distillation (MASS),} averages teacher predictions from multiple partially masked references to construct a less reference-specific target, selectively weights reliable token-level supervision, and adapts masking toward influential reference spans. Together, these components preserve useful privileged guidance while encouraging support for a broader set of reasoning paths.

\begin{figure}[!t]
\centering
\includegraphics[width=\linewidth]{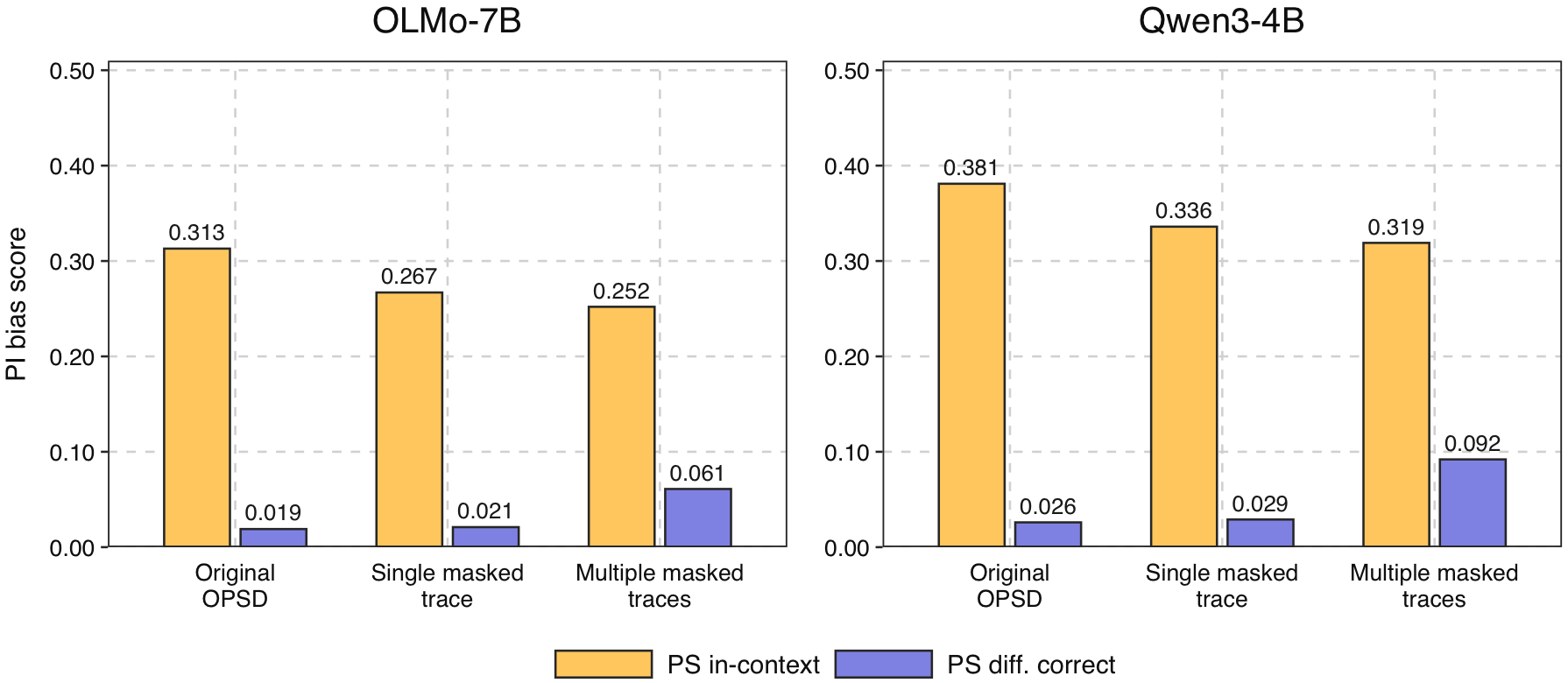}
\caption{PI Bias Scores for the in-context solution and a distinct correct solution under the full-reference teacher, a single masked view, and the arithmetic mean of multiple masked views.}
\label{fig}
\end{figure}

\noindent
{\bf A sanity check:} To test whether masked views recover support for alternative reasoning paths, we estimated the PI Bias Score of \citet{harne2026privileged}. At each student-generated prefix, we ask the privileged teacher and unprivileged student how much probability they assign to the same fixed continuation, and compute the difference in their mean token log-probabilities. We evaluate this continuation either as the reference solution shown to the teacher (\emph{PS in-context}) or as a different verified-correct solution to the same problem (\emph{PS different-correct}). A larger gap between these two scores means the teacher is more anchored to the solution that teacher sees as privileged information, while smaller gap means the teacher is more broadly distributed across valid reasoning alternatives. As shown in Figure~\ref{fig}, a single random mask changes this gap only slightly, whereas averaging multiple independently masked views substantially increases support for the different correct solution while retaining positive support for the in-context one. This suggests that multi-mask averaging can potentially encourage more \emph{diverse} reasoning supervision while still preserving useful reference-based guidance. This is supported by the extensive evaluations of MASS in this paper.

\section{Methodology}
\label{sec:method}

We describe here the key steps in MASS, with detailed derivations provided in Appendix ~\ref{app:theory}.

\subsection{Multi-Mask Teacher}
\label{subsec:mask-formulation}

We build on the on-policy self-distillation (OPSD) reasoning framework. Let $x$ be an input question and $R$ its reference solution, which is available only during training. The student generates an on-policy trajectory $y=(y_1,\ldots,y_T)$ from the question alone. At position $t$, its visible state is $s_t=(x,y_{<t})$, and its next-token distribution is $p_{S,t}=p_S(\cdot\mid s_t)$. The full-reference teacher (anchor teacher) uses the same model but additionally observes $R$, producing $p_{0,t}=p_T(\cdot\mid s_t,R)$. All teacher distributions are detached during the student update.

Unlike prior work, which typically constructs masked views by selecting a cutoff and hiding the remaining suffix \citep{tan2026paint, feng2024keypoint, nguyen2026avsd}, we independently mask spans throughout the reference to produce complementary teacher views. Specifically, we divide $R$ into $J$ spans (parsed based on punctuation) and use a binary vector $m\in\{0,1\}^{J}$ to represent a masked reference $g_m(R)$, where $m_j=1$ indicates that span $j$ is hidden. For each span, the masking decision is sampled independently as $m_j\sim\operatorname{Bernoulli}(q_j)$. The probabilities $q=(q_1,\ldots,q_J)$ are initially set to the same fixed masking rate and are later adapted as described in Section~\ref{subsec:adaptive-mask}. For each student trajectory, we sample $K\geq2$ masks $m^{(1)},\ldots,m^{(K)}$ and evaluate their teacher predictions at the same student prefixes:
\[
p_{k,t}
=
p_T\!\left(
\cdot\mid s_t,g_{m^{(k)}}(R)
\right),
\qquad k=1,\ldots,K.
\]

The training target is the arithmetic mean of the $K$ masked teachers
\begin{equation}
\widehat p_t
=
\frac{1}{K}\sum_{k=1}^{K}p_{k,t}
\label{eq:empirical-marginal}
\end{equation}
which is a stochastic approximation to the population target

\begin{equation}
\bar p_t
=
\mathbb E_{M\sim q}[p_{M,t}],
\label{eq:population-marginal}
\end{equation}
Section~\ref{subsec:selective-distillation} examines the impact of the uncertainty due to finite $K$ and our proposed solution.

\begin{figure*}[t]
\centering
\includegraphics[width=\textwidth]{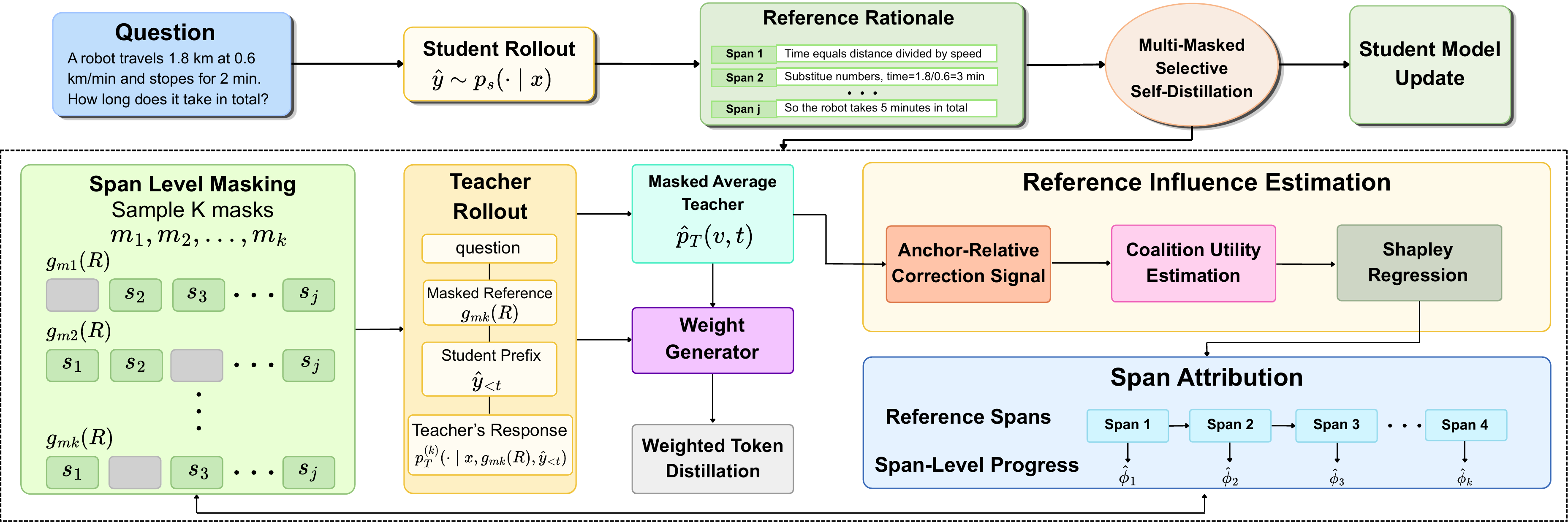}
\caption{\textbf{Overview of MASS.} Multiple masked reference provide complementary teacher predictions over the same student trajectory and are averaged into a marginal target. MASS selectively distills this target using reliability-aware weighting. In parallel, anchor-relative coalition utilities are decomposed  into span-level attributions, which adapt future masking probabilities.}
\label{fig:overview}
\end{figure*}
\subsection{Stable and Reference-Aware Distillation}
\label{subsec:selective-distillation}

Mask averaging does not by itself determine how strongly the student should follow the resulting target. The finite-sample uncertainty in the estimate $\widehat p_t$ should be distinguished from view diversity: different masked teachers may reveal useful alternative continuations, while their empirical average may still be too unstable to provide reliable supervision at a given position. We therefore design the token-level weight from two complementary perspectives. First, it should compare the magnitude of the mask-marginal correction with its sampling uncertainty, retaining corrections with a strong signal relative to noise. Second, it should preserve corrections that move the student away from a preference shared with the full-reference teacher, even when the masked views exhibit moderate disagreement. We use $p_{0,t}$, the teacher distribution conditioned on the complete reference $R$, as an anchor for identifying such shared preferences.

Under forward-KL distillation, the gradient with respect to the student logits is the student distribution minus the detached teacher target. We define the ideal mask-marginal gradient as $g_t=p_{S,t}-\bar p_t$ and its finite-view estimate as $\widehat g_t=p_{S,t}-\widehat p_T$. Independent mask sampling makes  $\widehat g_t$ unbiased, while its covariance is governed by the dispersion of masked-teacher predictions and decreases with $K$. These properties motivate three population quantities: the correction signal $S_t$, its finite-mask uncertainty $V_t$, and its alignment with the full-reference anchor $B_t$:
\begin{equation}
\begin{gathered}
S_t=\lVert g_t\rVert_2^2,
\qquad
V_t=\frac{1}{K}\operatorname{tr}\!\left(
\operatorname{Cov}_{M\sim q}(p_{M,t})
\right),
\\
B_t=
\left\langle
p_{S,t}-\bar p_t,\,
p_{0,t}-\bar p_t
\right\rangle .
\end{gathered}
\label{eq:signal-noise}
\end{equation}
Here, $S_t$ measures the magnitude of the population correction, while
$V_t=\mathbb{E}\lVert\widehat g_t-g_t\rVert_2^2$ measures the expected error caused by estimating it with only $K$ masks. The anchor term $B_t$ is positive when the student and the full-reference teacher deviate from $\bar p_t$ in similar directions. In this case, moving the student toward the mask marginal counteracts a preference reinforced by the complete reference. The three quantities characterize the directional alignment, magnitude, and reliability of the correction, respectively, providing a joint basis for understanding the effect of distillation.

The population quantities in Eq.~\eqref{eq:signal-noise} cannot be computed directly. However, the same $K$ teacher views used to form $\widehat p_t$ provide bias-corrected estimates:
\begin{equation}
\begin{aligned}
\widehat V_t
&=
\frac{1}{K(K-1)}
\sum_{k=1}^{K}
\lVert p_{k,t}-\widehat p_t\rVert_2^2,\\
\widehat S_t
&=
\lVert p_{S,t}-\widehat p_t\rVert_2^2-\widehat V_t,
\\
\widehat B_t
&=
\left\langle
p_{S,t}-\widehat p_t,\,
p_{0,t}-\widehat p_t
\right\rangle-\widehat V_t .
\end{aligned}
\label{eq:corrected-moments}
\end{equation}
The first line estimates the variance of the empirical mean. Because the same sampled mean appears in both quadratic statistics, their uncorrected values contain an additional $\widehat V_t$ term. Subtracting it makes $\widehat V_t$, $\widehat S_t$, and $\widehat B_t$ conditionally unbiased for $V_t$, $S_t$, and $B_t$, respectively.

We next determine how much of the empirical gradient should be retained. For a scalar $w\in[0,1]$, consider the population criterion
$\mathbb{E}\lVert w\widehat g_t-g_t\rVert_2^2
+\varepsilon w^2-2\lambda w[B_t]_+$,
where $[a]_+=\max(a,0)$, $\lambda\geq0$ controls the contribution of the anchor-relative signal, and $\varepsilon>0$ provides numerical regularization. Since
$\mathbb{E}\lVert w\widehat g_t-g_t\rVert_2^2
=(1-w)^2S_t+w^2V_t$,
its clipped minimizer is
$w_t^*=\min\{1,(S_t+\lambda[B_t]_+)/
(S_t+V_t+\varepsilon)\}$.
Replacing the population quantities with their estimates gives the practical weight
\begin{equation}
\widehat w_t=
\min\!\left\{
1,\,
\frac{
[\widehat S_t]_+
+\lambda[\widehat B_t]_+
}{
[\widehat S_t]_+
+\widehat V_t+\varepsilon
}
\right\}.
\label{eq:practical-weight}
\end{equation}
We restrict $\widehat w_t$ to $[0,1]$ because it represents the fraction of the empirical correction retained, rather than an amplification factor. A value near one applies the multi-mask correction almost fully, indicating that it is strong, reproducible, or supported by the anchor-relative signal; a value near zero largely removes a correction that can be explained by mask-sampling uncertainty. The unconstrained ratio may exceed one when the anchor term is sufficiently large, but this would extrapolate beyond the estimated correction and potentially destabilize training. We therefore clip such values to one and interpret them as sufficient evidence to retain the correction in full.
Finally, MASS applies the weight to token-level distillation:
\begin{equation}
\mathcal L(\theta)
=
\frac{1}{T}
\sum_{t=1}^{T}
\operatorname{sg}(\widehat w_t)\,
\operatorname{KL}\!\left(
\operatorname{sg}(\widehat p_t)
\,\Vert\,
p_{S,t}
\right),
\label{eq:weighted-loss}
\end{equation}
where $\operatorname{sg}$ denotes stop-gradient. The scalar weight controls the strength of supervision at each position while preserving the relative probabilities of the arithmetic teacher target.

\subsection{Adaptive Masking}
\label{subsec:adaptive-mask}

Uniform masking assigns the same expected budget to all reference spans, although their effects on the teacher may differ substantially. Some spans provide only local details, while others determine the reasoning direction or final answer. We therefore attribute useful mask-induced changes to individual spans and adapt their masking probabilities on later visits to the same example. The utility of a span cannot be evaluated by masking it in isolation, because the effect depends on which other spans remain visible.  Instead, as discussed below, we treat spans as players in a cooperative game and masked spans as a coalition to compute approximate Shapley values\citep{shapley1953value} for each span.  These are then used to update the masking probabilities.

Let $A\subseteq\{1,\ldots,J\}$ denote a coalition of masked spans and let
$p_{A,t}=p_T(\cdot\mid s_t,g_A(R))$ be the corresponding teacher distribution. During one visit, all coalitions are evaluated against the same detached correction direction:
\begin{equation}
v(A)=\frac{1}{T}\sum_{t=1}^{T}
\left\langle
\operatorname{sg}(p_{S,t}-\widehat p_t),\,
p_{0,t}-p_{A,t}
\right\rangle.
\label{eq:coalition-value}
\end{equation}
A positive $v(A)$ means that masking coalition $A$ changes the teacher in a direction consistent with the student update toward the common mask average. Fixing the first factor makes coalition values comparable. The unmasked teacher provides a zero-utility baseline: when $A=\varnothing$,$p_{\varnothing,t}=p_{0,t}$ and $v(\varnothing)=0$.

For a coalition $A$ that excludes span $j$, define the marginal contribution of $j$ as
$\delta_j(A)=v(A\cup\{j\})-v(A)$. The Shapley value of span $j$ is
\begin{equation}
\phi_j=
\sum_{A\subseteq\{1,\ldots,J\}\setminus\{j\}}
\frac{|A|!(J-|A|-1)!}{J!}\,
\delta_j(A).
\label{eq:shapley}
\end{equation}
Hence, $\phi_j>0$ indicates that masking span $j$ supports the current mask-marginal correction on average over the states of the other spans.

Instead of considering the $2^J$ coalitions required for the exact computation in (\ref{eq:shapley}), we reuse the $K$ masks already evaluated for distillation. For mask $m^{(k)}\in\{0,1\}^J$, let $A_k$ denote its masked-span coalition and $R^{(k)}=v(A_k)$ its utility. We estimate span attributions using a Shapley-kernel weighted regression \citep{lundberg2017unified}:
\begin{equation}
\begin{split}
\widehat\phi={}&
\arg\min_{\beta}
\sum_{k=1}^{K}\omega_k
\left(R^{(k)}-m^{(k)\top}\beta\right)^2
+\tau\lVert\beta-z\rVert_2^2
\\
&\text{s.t.}
\quad
\sum_{j=1}^{J}\beta_j=v(\{1,\ldots,J\}).
\end{split}
\label{eq:shapley-regression}
\end{equation}
where $\beta_j$ is the attribution assigned to span $j$, $z_j$ is its running estimate from previous visits updated through exponential moving average, and $\tau\geq0$ stabilizes the fit when $K<J$. The weight $\omega_k$ is the Shapley-kernel weight corrected by the sampling probability of $m^{(k)}$ under $q$.

We set the masking priority of span $j$ to
$I_j=[\widehat\phi_j]_+$.
Let $q_j$ be the probability of masking span $j$, so
$\operatorname{Ber}(q_j)$ denotes the corresponding Bernoulli mask distribution. We update $q$ by
\begin{equation}
\begin{split}
q^{(n+1)}
={}&
\arg\max_{q\in\mathcal Q}
\left\{
\sum_{j=1}^{J}I_jH_{\mathrm{Ber}}(q_j)
\vphantom{\frac{1}{\eta}}
\right.
\\
&\left.
-\frac{1}{\eta}\sum_{j=1}^{J}
\operatorname{KL}\!\left(
\operatorname{Ber}(q_j)
\,\Vert\,
\operatorname{Ber}(q_j^{(n)})
\right)
\right\},
\end{split}
\label{eq:mask-update}
\end{equation}
where
$H_{\mathrm{Ber}}(q_j)=-q_j\log q_j-(1-q_j)\log(1-q_j)$,
$\mathcal Q=\{q:\sum_jq_j=\rho J,\,
\epsilon\leq q_j\leq1-\epsilon\}$,
and $\eta>0$ controls the adaptation rate.
The entropy term keeps influential spans sampled in both masked and visible states, while the KL term prevents abrupt changes in their masking probabilities.

\section{Experiments}

\begin{table}[!t]
\centering
\setlength{\tabcolsep}{3.5pt}
\small
\caption{
\textbf{Performance comparison on mathematical reasoning benchmarks.}
We report Avg@12 accuracy on AIME 2024, AIME 2025, and HMMT 2025.
MASS consistently achieves the strongest average performance across all evaluated model settings.
}
\label{tab:main_results}

\begin{tabular}{llcccc}
\toprule
Model & Method & AIME24 & AIME25 & HMMT25 & Avg. \\
\midrule

\multirow{5}{*}{Qwen3-1.7B}
& Base & 51.39 & 36.67 & 23.06 & 37.04 \\
& OPSD & 55.56 & 41.11 & 26.94 & 41.20 \\
& AVSD & 59.17 & 45.28 & 29.72 & 44.72 \\
& RLSD & 58.89 & 45.83 & 30.00 & 44.91 \\
& \cellcolor{gray!15}\textbf{MASS}
& \cellcolor{gray!15}\textbf{60.83}
& \cellcolor{gray!15}\textbf{47.50}
& \cellcolor{gray!15}\textbf{30.83}
& \cellcolor{gray!15}\textbf{46.39} \\
\midrule

\multirow{5}{*}{Qwen3-4B}
& Base & 75.00 & 66.39 & 42.22 & 61.20 \\
& OPSD & 76.39 & 68.33 & 43.33 & 62.68 \\
& AVSD & 78.33 & 72.78 & 45.56 & 65.56 \\
& RLSD & 78.06 & 72.50 & 45.28 & 65.28 \\
& \cellcolor{gray!15}\textbf{MASS}
& \cellcolor{gray!15}\textbf{81.11}
& \cellcolor{gray!15}\textbf{73.89}
& \cellcolor{gray!15}\textbf{48.06}
& \cellcolor{gray!15}\textbf{67.69} \\
\midrule

\multirow{5}{*}{Qwen3-8B}
& Base & 78.89 & 70.83 & 48.33 & 66.02 \\
& OPSD & 81.67 & 74.44 & 50.83 & 68.98 \\
& AVSD & 83.89 & 77.50 & 53.06 & 71.48 \\
& RLSD & 83.61 & 77.22 & 52.78 & 71.20 \\
& \cellcolor{gray!15}\textbf{MASS}
& \cellcolor{gray!15}\textbf{85.56}
& \cellcolor{gray!15}\textbf{79.44}
& \cellcolor{gray!15}\textbf{54.72}
& \cellcolor{gray!15}\textbf{73.24} \\
\midrule

\multirow{5}{*}{OLMo-7B}
& Base & 71.9 & 66.9 & 45.0 & 61.3 \\
& OPSD & 73.1 & 70.0 & 46.1 & 63.1 \\
& AVSD & 74.7 & 71.9 & 47.8 & 64.8 \\
& RLSD & 75.6 & 72.8 & 48.3 & 65.6 \\
& \cellcolor{gray!15}\textbf{MASS}
& \cellcolor{gray!15}\textbf{78.1}
& \cellcolor{gray!15}\textbf{75.0}
& \cellcolor{gray!15}\textbf{51.1}
& \cellcolor{gray!15}\textbf{68.1} \\

\bottomrule
\end{tabular}
\end{table}

\subsection{Experimental Setup}
\label{subsec::eval}
We conduct experiments on Qwen3-1.7B, Qwen3-4B, Qwen3-8B
\citep{yang2025qwen3}, and OLMo-7B\citep{olmo2026olmo3}, covering multiple model scales and
families to evaluate whether MASS generalizes across both weaker and stronger
reasoners as well as different model families. Following the OPSD training
protocol, we use 30K mathematical reasoning samples from OpenThoughts
\citep{guha2025openthoughts} for training.

We evaluate on AIME 2024, AIME 2025, and HMMT 2025
\citep{dekoninck2026matharena}. For each problem, we sample 12 independent
reasoning trajectories, resulting in 360 responses per benchmark. We report
Avg@12 accuracy, defined as the proportion of sampled responses that reach the
correct final answer.

Experiment details for the MASS implementation are provided Appendix C, while additional experimental results are provided in Appendix B.

\begin{figure*}[t]
\centering
\includegraphics[width=0.95\textwidth]{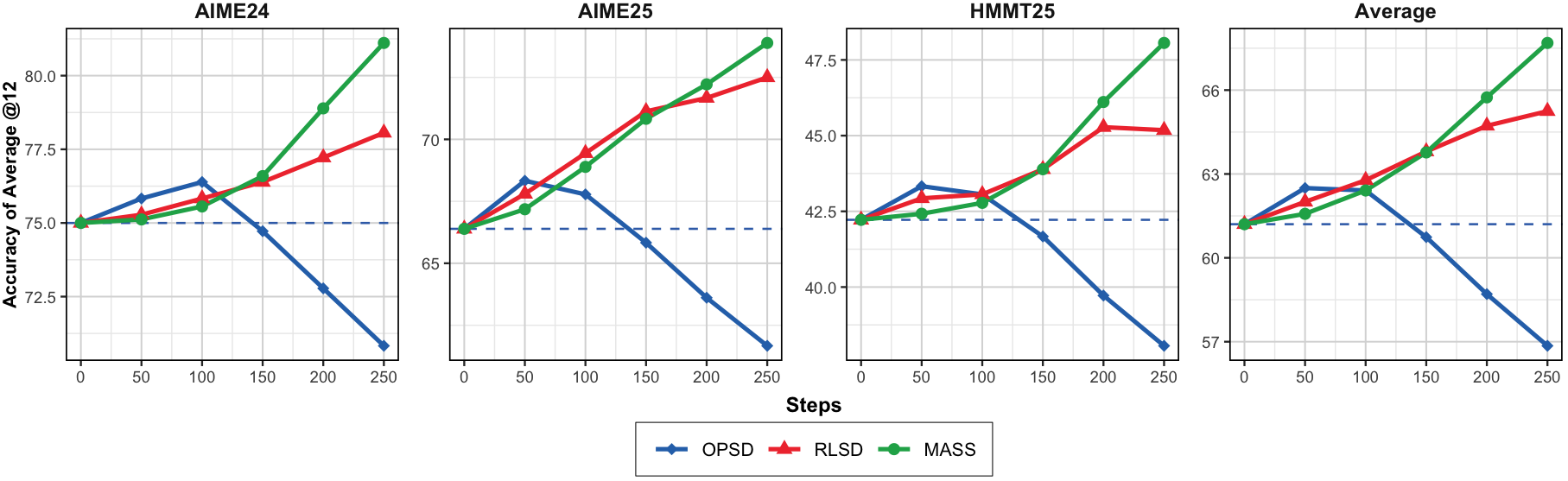}
\caption{\textbf{Performance Dynamics over 250 Training Steps.} We compare MASS
with OPSD and RLSD on Qwen3-4B reporting Avg@12 accuracy on AIME25 within 250 training
steps. OPSD peaks early, while RLSD and MASS sustain improvement throughout
training.}
\label{fig:training_dynamics}
\end{figure*}

We compare MASS with three representative post-training baselines. OPSD
\citep{zhao2026self} is the direct single-view
self-distillation baseline, where the teacher is conditioned on the full
reference solution. RLSD \citep{yang2026self} addresses
privileged-information leakage through an RLVR-based formulation. Although it
belongs to a different methodological family, we include it as a strong
reasoning baseline and to compare token efficiency, since RLSD relies on
grouped rollouts and verifiable rewards while MASS performs dense token-level
distillation from a single student trajectory. AVSD \citep{nguyen2026avsd} is the closest multi-view baseline. It combines
heterogeneous privileged views by separating cross-view consensus from
view-specific residual signals. In contrast, MASS constructs complementary
views by masking different spans of the same reference and uses arithmetic
aggregation to preserve alternative reasoning branches.

\subsection{Main Results}
Table~\ref{tab:main_results} shows that MASS consistently outperforms the compared post-training baselines across model scales and model families. The improvement over OPSD indicates that directly distilling from a fully privileged teacher can become limiting when the supervision is overly tied to a particular reference trajectory. MASS also remains stronger than AVSD, suggesting that the benefit cannot be attributed to multi-view supervision alone; how the views are constructed and aggregated is important for preserving transferable reasoning signals. Compared with RLSD, MASS achieves a smaller but consistent advantage, despite relying on a substantially different training mechanism. These results suggest that marginalizing over complementary masked views provides an effective alternative to reward-guided leakage mitigation. The gains persist from smaller to stronger Qwen3 models and transfer to OLMo, indicating that the method is not restricted to a particular model capacity or family.

Figure~\ref{fig:training_dynamics} provides a complementary view of how these differences emerge during training. OPSD typically improves fastest in the early stage, but its performance peaks early and subsequently deteriorates. RLSD exhibits substantially more stable improvement, while MASS initially progresses more slowly and gradually closes the gap as training continues. Around the middle of training, MASS catches up with or crosses RLSD, after which it continues to improve and achieves the strongest late-stage performance. Although the exact crossover point varies across benchmarks, the same qualitative pattern is visible in the averaged trajectory. These dynamics suggest that the advantage of MASS does not come from more aggressive early optimization, but from providing a supervision target that remains useful over longer training horizons.

Motivated by \citet{kim2026does}, we further examine response
length and epistemic token usage as indirect behavioral indicators of
overconfidence during self-distillation. Epistemic tokens refer to expressions
such as \textit{wait}, \textit{perhaps}, and \textit{maybe}, which verbalize
uncertainty and can signal reconsideration of an intermediate reasoning path.
Prior work shows that privileged conditioning tends to suppress such
verbalization, producing shorter and more confident responses; when this
suppression becomes excessive, it can remove signals useful for error
correction and robust reasoning.

As shown in Figure~\ref{fig:epistemic_length}, both OPSD and MASS exhibit an
early increase in response length followed by compression as training
progresses. However, the contraction under OPSD is substantially stronger,
whereas MASS maintains longer reasoning traces and decreases more gradually.
A similar distinction appears in epistemic token usage. OPSD progressively
suppresses epistemic expressions, while MASS shows only a small initial
reduction followed by a gradual recovery, with some variation across
checkpoints. These patterns suggest that MASS mitigates the
confidence-inducing effect of privileged self-distillation: it still learns
to compress reasoning, but does so without aggressively eliminating
uncertainty-aware behavior that may help preserve alternative reasoning
possibilities.

\subsection{Ablation Study}

\paragraph{Anchor-target ablation.}
We examine whether the full-reference teacher should also participate in target construction. In the original MASS, the target is the arithmetic mean of the $K$ masked teachers. In the anchor variant, we instead use
$\widetilde p_t=(p_{0,t}+\sum_{k=1}^{K}p_{k,t})/(K+1)$,
treating the full-reference teacher as one additional view. Since $p_{0,t}$ is fixed with respect to mask sampling, we rescale the estimated target variance by $(K/(K+1))^2$. All other components, including the sampled masks, selective weight, adaptive masking, training schedule, and evaluation protocol, remain unchanged. We conduct this comparison on Qwen3-4B and OLMo-7B using AIME 2025 Avg@12.

\begin{table}[!t]
\centering
\small
\setlength{\tabcolsep}{4.5pt}
\caption{\textbf{Ablation on the anchor teacher.}
AIME 2025 Avg@12 accuracy is reported.}
\label{tab:anchor_ablation}
\begin{tabular}{llcc}
\toprule
Model & Target & Acc. \\
\midrule
Qwen3-4B
& \textbf{Masked avg. (Base)}
& \textbf{73.89} \\
& Masked avg. $+$ anchor
& 72.50 \\

\addlinespace[0.35\baselineskip]

OLMo-7B
& \textbf{Masked avg. (Base)}
& \textbf{75.00}\\
& Masked avg. $+$ anchor
& 73.89 \\
\bottomrule
\end{tabular}
\end{table}

As shown in Table~\ref{tab:anchor_ablation}, adding the anchor to the target produces a modest decrease on both models. The degradation remains limited because the $K$ masked teachers still contribute most of the averaged target, while the anchor can provide a stable and confident signal. Nevertheless, its inclusion shifts supervision back toward the fully observed reference and dilutes continuations supported only by masked views. The consistent decrease suggests that the full-reference teacher is more effective as a signal for selecting useful corrections than as a direct component of the distillation target.

\begin{figure*}[t]
    \centering
    \begin{subfigure}[t]{0.49\linewidth}
        \centering
        \includegraphics[width=\linewidth]{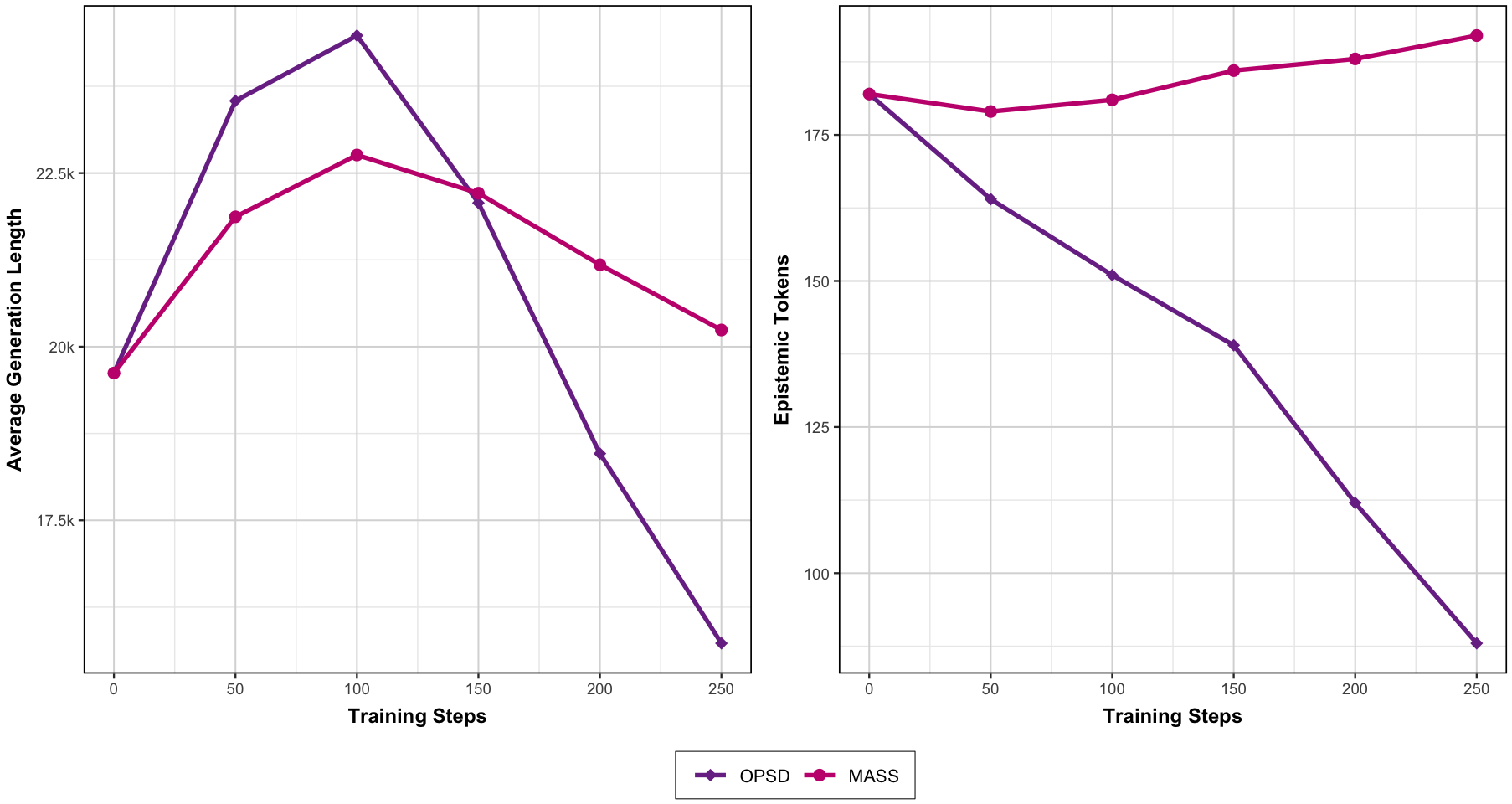}
        \caption{Qwen3-8B}
        \label{fig:epistemic_qwen8b}
    \end{subfigure}
    \hfill
    \begin{subfigure}[t]{0.49\linewidth}
        \centering
        \includegraphics[width=\linewidth]{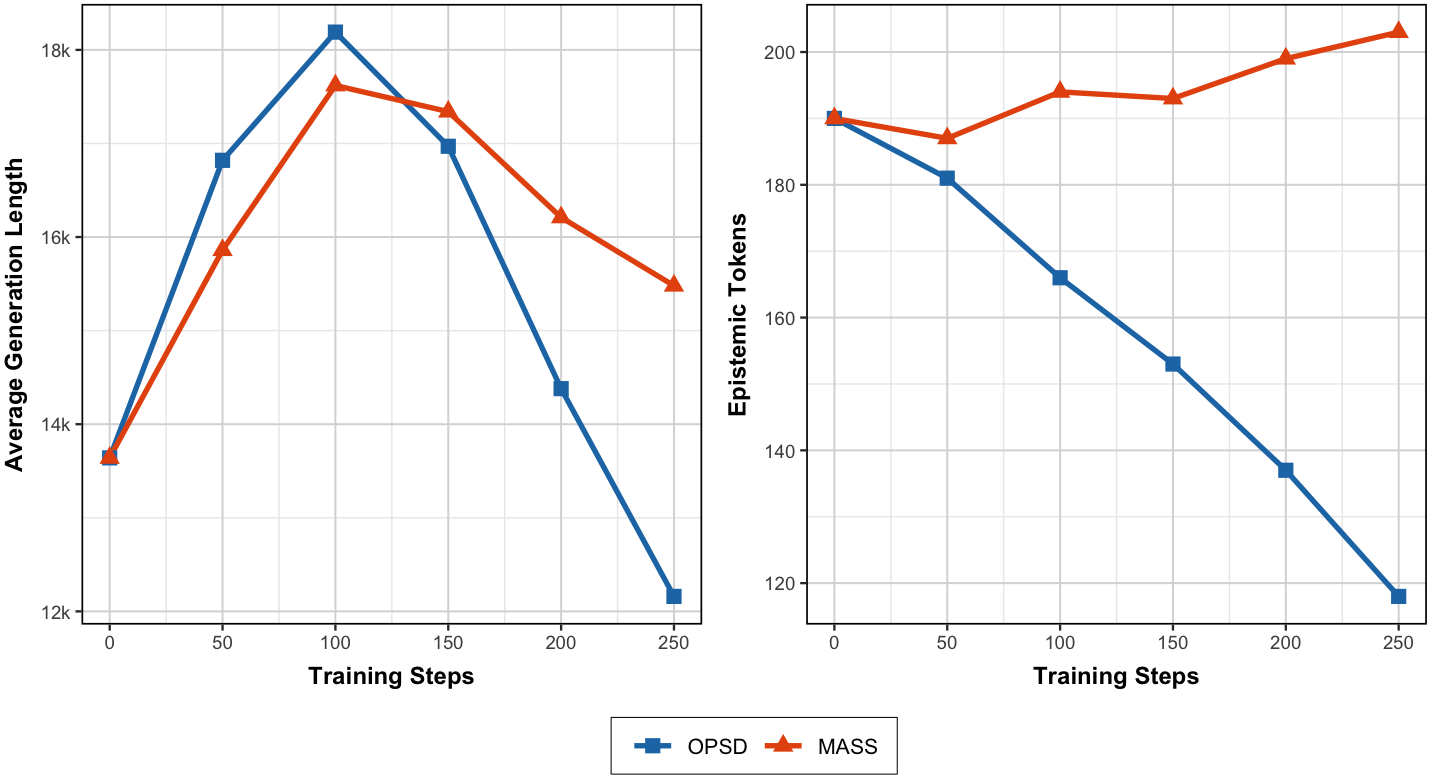}
        \caption{OLMo-7B}
        \label{fig:epistemic_olmo7b}
    \end{subfigure}

    \caption{
   \textbf{Reasoning behavior during self-distillation.} We measured average response length and epistemic token usage for (a) Qwen3-8B and (b) OLMo-7B on AIME25 evaluation.
    }
    \label{fig:epistemic_length}
\end{figure*}

\paragraph{Sensitivity to Teacher Route Scaling and Masking Strength.}
We then study how MASS scales with the number of teacher routes. As shown in
Figure~\ref{fig:teacher_routes}, performance improves as more
masked teacher routes are introduced. With only a small number of routes, the
aggregated target is still determined by a limited set of privileged views.
Increasing the number of routes provides broader coverage over complementary
masked contexts, reducing dependence on any particular privileged trajectory
and allowing the aggregated teacher to preserve a richer set of valid
reasoning signals. The gains become more pronounced as route coverage
increases, suggesting that sufficiently diverse teacher routes are important
for effective marginalization.
\begin{figure}[H]
    \centering
    \includegraphics[width=0.7\linewidth]{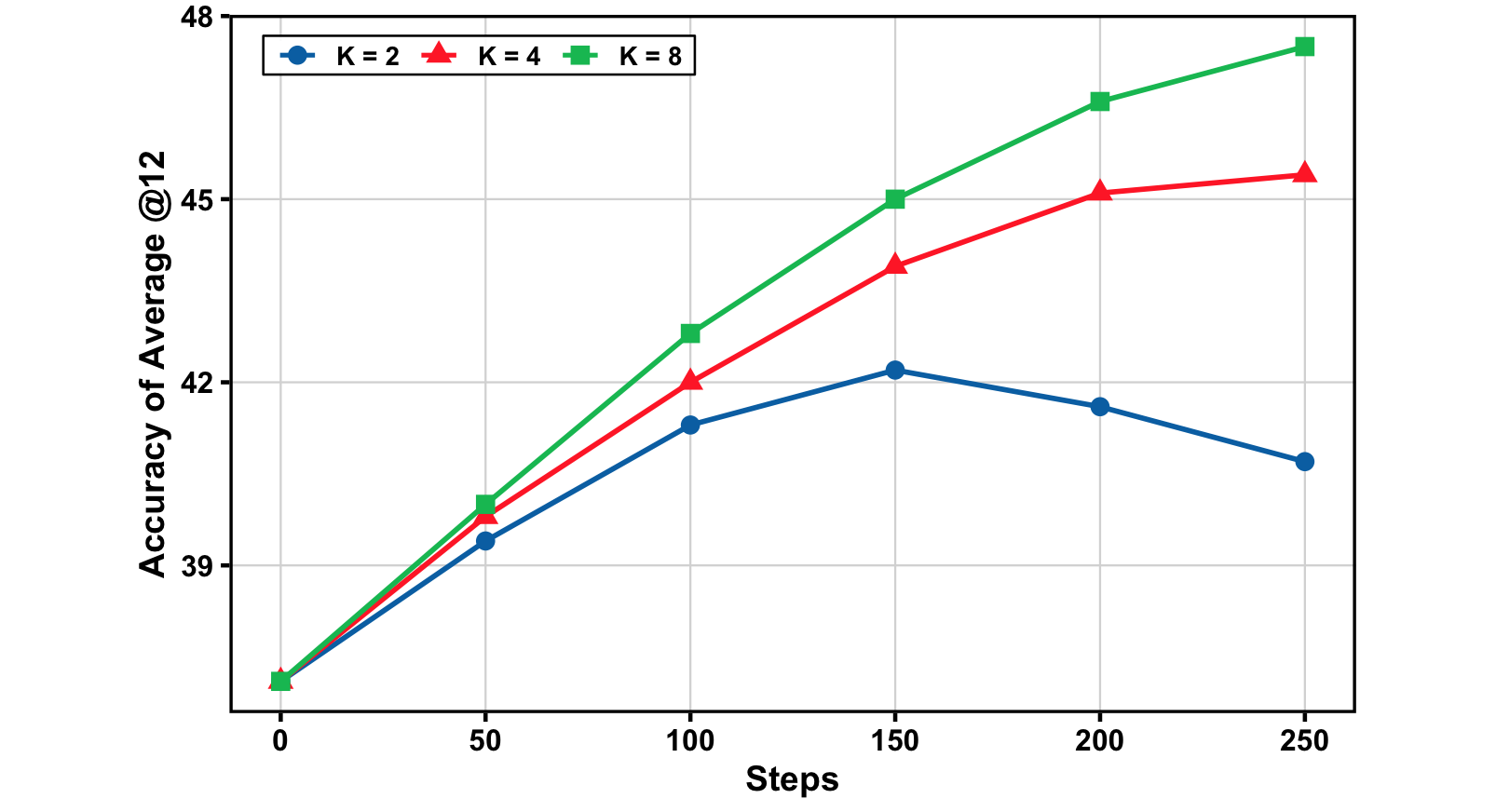}
    \caption{\textbf{Sensitivity to the number of teacher routes.}
    Increasing the number of routes improves AIME 2025 Avg@12 accuracy on
    Qwen3-1.7B.}
    \label{fig:teacher_routes}
\end{figure}
We next vary the masking ratio while keeping the remaining training setup
fixed. Figure~\ref{fig:masking_ratio} exhibits an inverted-U
pattern, with a masking ratio of $0.3$ achieving the strongest performance.
When the masking ratio is too small, most of the reference solution remains
visible to the teacher, leaving the supervision strongly privileged and
insufficiently mitigating overconfident, trajectory-specific guidance.
Conversely, overly aggressive masking removes too much useful reasoning
evidence and weakens the informativeness of the teacher target. The intermediate
masking ratio provides the best trade-off between reducing
privileged-information bias and retaining sufficient supervision signal.
\begin{figure}[H]
    \centering
    \includegraphics[width=0.7\linewidth]{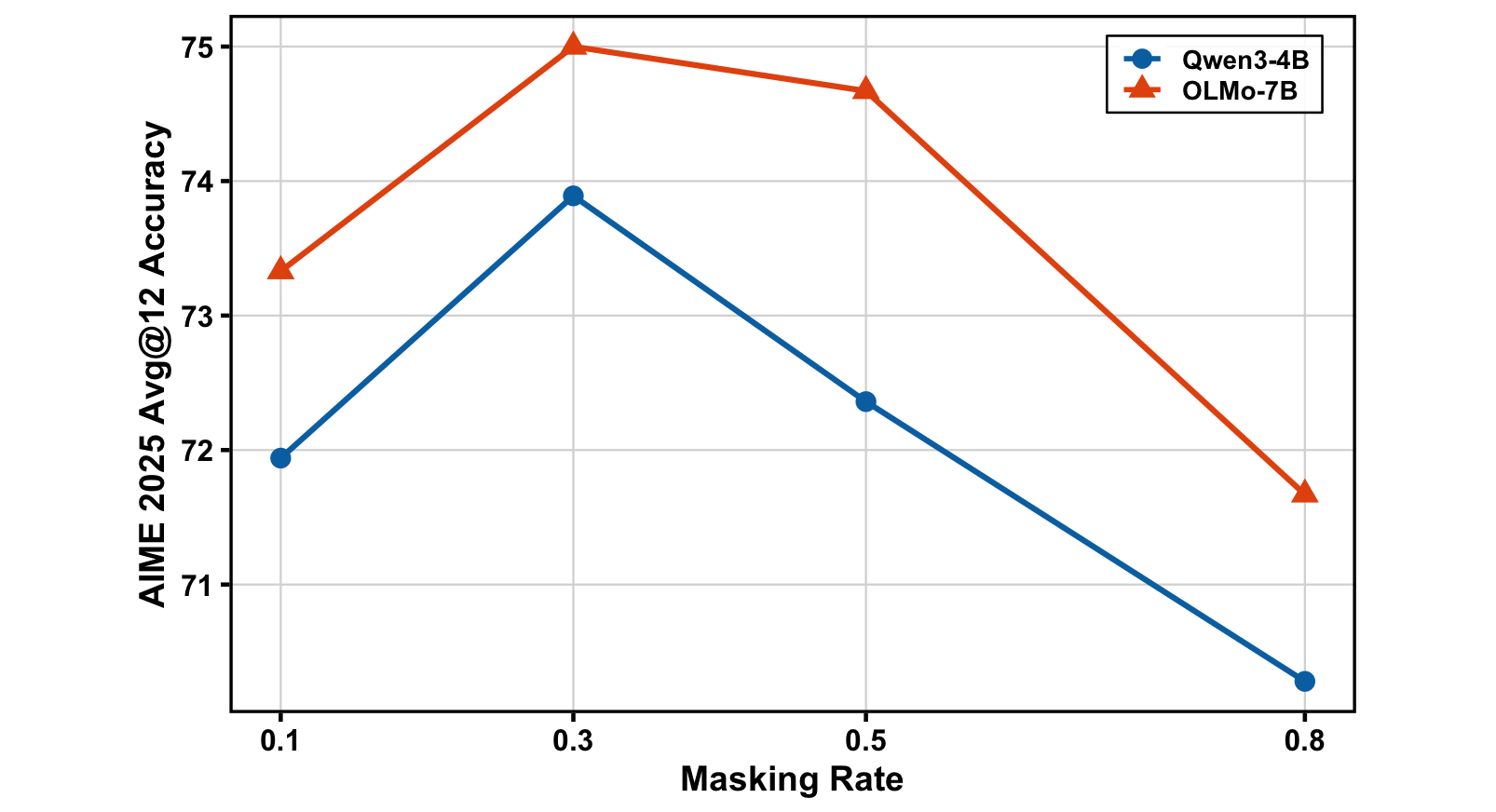}
    \caption{\textbf{Sensitivity to masking strength.}
    On Qwen3-4B, a moderate masking ratio gives the best AIME 2025 Avg@12
    accuracy.}
    \label{fig:masking_ratio}
\end{figure}

\paragraph{Reliability Weighting and Adaptive Masking.}
To isolate the contribution of the two components in MASS, we ablate
reliability weighting and adaptive masking on Qwen3-4B and evaluate on
AIME 2025. We compare three reduced variants against the full model:
removing both components, retaining only reliability weighting, and
retaining only adaptive masking.

\begin{table}[H]
    \centering
       \caption{
    \textbf{Ablation of reliability weighting and adaptive masking.}
    Both components improve AIME 2025 performance on Qwen3-4B, with the full
    configuration achieving the strongest result.
    }
    \setlength{\tabcolsep}{3pt}
    \begin{tabular}{lccc}
        \toprule
        Variant & Weighting & Adaptive Masking & AIME25 \\
        \midrule
        Neither & \xmark & \xmark & 70.83 \\
        Weighting only & \cmark & \xmark & 72.22 \\
        Adaptive only & \xmark & \cmark & 72.78 \\
        Full MASS & \cmark & \cmark & \textbf{73.89} \\
        \bottomrule
    \end{tabular}
 
    \label{tab:weight-mask-ablation}
\end{table}

The results show that both components contribute independently and are
complementary. Reliability weighting reduces the influence of less reliable
masked teacher signals, while adaptive masking improves the construction of
informative masked views during training. Adaptive masking provides a slightly
larger gain when used alone, but the strongest performance is obtained when
both mechanisms are enabled. This suggests that MASS benefits from jointly
controlling \emph{which} masked views are constructed and \emph{how} their
supervision is weighted.

\paragraph{Out-of-Distribution (OOD) Generalization.}
To examine whether the improvements of MASS transfer beyond the mathematical reasoning domain used for training, we conduct an out-of-distribution evaluation on GPQA\citep {rein2023gpqa} and BBH \citep{suzgun2023challenging}. These benchmarks cover scientific knowledge and diverse reasoning tasks, respectively, and therefore differ from our mathematical training distribution. We evaluate the Qwen3-8B and OLMo3-7B-Think models and compare MASS with the corresponding base models, OPSD, and AVSD.
As shown in Table~\ref{tab:ood_results}, MASS achieves the strongest performance across both benchmarks and model families. In contrast, OPSD and AVSD generally reduce out-of-distribution accuracy, suggesting that conventional privileged self-distillation can over-specialize the model to the training-domain reasoning patterns. By marginalizing over complementary masked references and selectively retaining reliable corrections, MASS may reduce this specialization and preserve more of the models' general reasoning capabilities.
\begin{table}[t]
\centering
\small
\caption{\textbf{OOD evaluation.}
GPQA Avg@8 and BBH Avg@1 accuracy are reported.}
\label{tab:ood_results}
{\fontsize{8}{8.5}\selectfont
\setlength{\tabcolsep}{4.5pt}
\renewcommand{\arraystretch}{0.82}

\begin{tabular}{llcc}
\toprule
Model & Method & GPQA & BBH \\
\midrule
\multirow{4}{*}{Qwen3-8B}
& Base & 57.7 & 84.4 \\
& OPSD & 54.8 & 82.9 \\
& AVSD & 56.6 & 83.7 \\
& \textbf{MASS} & \textbf{58.3} & \textbf{84.6} \\
\midrule
\multirow{4}{*}{OLMo3-7B-Think}
& Base & 46.2 & 86.6 \\
& OPSD & 43.9 & 85.0 \\
& AVSD & 44.1 & 85.9 \\
& \textbf{MASS} & \textbf{46.9} & \textbf{86.8} \\
\bottomrule
\end{tabular}
}
\end{table}

\section{Related Work}
\label{sec:related-work}

\paragraph{Knowledge Distillation and On-Policy Self-Distillation.}
Knowledge distillation transfers predictive knowledge from a capable teacher to a smaller student \citep{hinton2015distilling,wang2020minilm,hsieh2023distilling}. Conventional off-policy distillation relies on fixed teacher demonstrations and can therefore suffer from a mismatch with student-induced states \citep{ross2011reduction}. On-policy distillation instead evaluates the teacher along student trajectories \citep{agarwal2024policy}, while on-policy self-distillation uses the same model as both student and teacher, with the latter receiving privileged information such as a verified solution \citep{zhao2026self}. This efficient framework has also been extended to rubrics, visual inputs, and long-context information \citep{ye2026policy,zhang2026opsdl,bablani2026rubrics}. Its central challenge is whether supervision produced with privileged information remains transferable when that information is unavailable at inference time.

\paragraph{Distillation from Privileged Information.}
Recent studies identify several consequences of privileged conditioning, including reference-dependent signals, unstable optimization, trajectory-specific preferences, style drift, and suppressed epistemic behavior \citep{yang2026self,harne2026privileged,pan2026rlcsd,kim2026does}. Existing methods address these effects by anchoring or contrasting teacher signals \citep{yang2026self,pan2026rlcsd,he2026not}, or by interpolating, reshaping, and selectively applying the privileged target \citep{li2026demopsd,xu2026betaopsdderivingpolicyoptimization,shen2026anti}. PAINT varies partial-solution exposure \citep{tan2026paint}, whereas AVSD combines multiple types of privileged information through cross-view consensus \citep{nguyen2026avsd}. MASS focuses on the setting where only one verified solution is available. It creates complementary views through independent span masking and averages their predictions as an empirical masked-reference marginal. Selective weighting and adaptive masking further control finite-view uncertainty and focus masking on influential spans, allowing MASS to broaden the teacher target rather than only rescale supervision from a fixed privileged view.

\section{Conclusion}
We introduced \textbf{Multi-Mask Selective Self-Distillation (MASS)}, which combines multi-mask teacher aggregation, selective distillation, and adaptive masking to outperform existing baselines across models and reasoning benchmarks. Beyond these gains, our results suggest that constructing complementary teacher contexts can broaden support beyond the demonstrated trajectory, while reliability aware weighting separates useful view diversity from finite mask uncertainty. More broadly, MASS highlights teacher-context design as an important dimension of privileged self-distillation. Future work may incorporate MASS-derived token-level correction signals into RLVR to provide dense, reliable advantage weighting within an GRPO framework.

Several limitations remain. While MASS requires no additional autoregressive rollouts and is substantially more token-efficient than group-based generation, evaluating multiple masked teacher routes still increases peak accelerator-memory usage. This overhead may be reduced through memory-aware route batching or adaptive allocation of teacher routes. The current implementation also relies on fixed span segmentation and sampled Shapley attribution, motivating learned segmentation and more efficient attribution estimators. Finally, our evaluation mainly focuses on reasoning. Extending MASS to other domains such as coding remains an important direction.
\section{Acknowledgement}
This research was initiated through the UC Santa Barbara Research Mentorship Program (RMP), which we gratefully acknowledge. We thank Dr. Lina Kim and Dr. Teresa Holden for directing the program. The work begun during the program was subsequently continued and extended to produce this paper.
\section{Reproducibility Statement}
To facilitate reproducibility, we provide detailed descriptions of the model architecture in Section~\ref{sec:method}, the training and evaluation strategy in Section~\ref{subsec::eval}, and the hyperparameters and other training details in Appendix~\ref{experimentdetail}. The source code used in our experiments is publicly available at \url{https://github.com/researcheranoymous-oss/MASS_2027}

\bibliographystyle{plainnat}
\bibliography{references}

\appendix

\section{Theoretical Development}
\label{app:theory}

This appendix collects the derivations that Section~2 states without proof and makes explicit the assumptions behind them. Throughout, we fix a student prefix $s_t$, the model snapshot, and the masking distribution $q$, and drop the position index $t$ when no confusion arises. All probability vectors live in the simplex $\simplex$ over the vocabulary $\mathcal V$, and $\langle\cdot,\cdot\rangle$, $\|\cdot\|$ denote the Euclidean inner product and norm on $\mathbb R^{|\mathcal V|}$. The $K$ masks $m^{(1)},\dots,m^{(K)}$ are i.i.d.\ draws from $q$, so the masked teachers $p_k=p_T(\cdot\mid s_t,g_{m^{(k)}}(R))$ are i.i.d.\ random vectors with mean $\pbar=\E_{M\sim q}[p_M]$ and covariance $\Sigma=\Cov_{M\sim q}(p_M)$. We write $\phat=\frac1K\sum_k p_k$, $g=\pS-\pbar$ and $\ghat=\pS-\phat$ as in Section~2.2.

\subsection{Forward-KL distillation acts on the logits through the target gap}

\begin{lemma}[Logit gradient of forward KL]
\label{lem:grad}
Let $\pS=\mathrm{softmax}(z)$ for logits $z\in\mathbb R^{|\mathcal V|}$ and let $p\in\simplex$ be fixed. Then $\nabla_z \KL(p\,\|\,\pS)=\pS-p$.
\end{lemma}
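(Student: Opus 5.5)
We expand the forward KL as a function of the logits and differentiate term by term. Writing $\KL(p\,\|\,\pS)=\sum_{v}p_v\log p_v-\sum_v p_v\log \pS(v)$, the first sum (the negative entropy of $p$) does not depend on $z$ because $p$ is fixed, so only the cross-entropy term $-\sum_v p_v\log\pS(v)$ contributes to the gradient. Using $\pS(v)=e^{z_v}/\sum_{u}e^{z_u}$, we rewrite
\[
-\sum_v p_v\log\pS(v) = -\sum_v p_v z_v + \Big(\sum_v p_v\Big)\log\sum_u e^{z_u},
\]
and since $p\in\simplex$ we have $\sum_v p_v=1$, so the cross-entropy equals $-\langle p,z\rangle+\log\sum_u e^{z_u}$.

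Next we differentiate each piece with respect to a coordinate $z_w$. The linear term gives $-p_w$. The log-sum-exp term gives $e^{z_w}/\sum_u e^{z_u}=\pS(w)$. Collecting coordinates yields $\nabla_z\KL(p\,\|\,\pS)=\pS-p$, which is the claim.

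There is no genuine obstacle here. The only point that needs care is that the normalization $\sum_v p_v=1$ must be used. Without it, the log-partition term would carry a factor $\sum_v p_v$, and the gradient would be $(\sum_v p_v)\pS-p$ instead. We will also note that treating $p$ as fixed is exactly what the stop-gradient in Eq.~\eqref{eq:weighted-loss} enforces. Consequently, with $p=\phat$ the logit gradient is $\ghat$, and with $p=\pbar$ it is $g$, as used in Section~\ref{subsec:selective-distillation}.
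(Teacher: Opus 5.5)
Your proof is correct and matches the paper's argument: write the KL as the $z$-independent negative entropy of $p$, minus $\langle p,z\rangle$, plus $\log\sum_v e^{z_v}$, then read off the gradients $-p$ and $\mathrm{softmax}(z)$. Your explicit remark that $\sum_v p_v=1$ is what gives the log-partition term a unit coefficient makes visible a step the paper uses without comment.
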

\begin{proof}
$\KL(p\|\pS)=\sum_v p_v\log p_v-\sum_v p_v z_v+\log\sum_v e^{z_v}$. The first term is constant in $z$, the second has gradient $-p$, and the third has gradient $\mathrm{softmax}(z)=\pS$.
\end{proof}

Applied to the MASS objective in Eq.~(6), Lemma~\ref{lem:grad} gives
\begin{equation}
\label{eq:app-grad}
\nabla_{z_t}\mathcal L(\theta)=\tfrac1T\,\what_t\,(p_{S,t}-\phat_t)=\tfrac1T\,\what_t\,\ghat_t,
\end{equation}
so the per-position update is exactly the empirical mask-marginal gradient $\ghat_t$ scaled by the selective weight. The weight in Section~2.2 is therefore best read as controlling the error of $\what_t\ghat_t$ as an estimator of the population correction $g_t$.

\subsection{What the mask marginal removes}

The introduction motivates the arithmetic mean of masked teachers as a surrogate for a marginal over reference realisations. The following identity makes precise which part of the masked-teacher signal is discarded by distilling toward $\pbar$ instead of toward individual views.

\begin{proposition}[Decomposition of masked-teacher distillation]
\label{prop:decomp}
For any $r\in\simplex$ with full support,
\[
\E_{M\sim q}\KL(p_M\,\|\,r)=\KL(\pbar\,\|\,r)+\E_{M\sim q}\KL(p_M\,\|\,\pbar).
\]
Consequently, $\pbar$ is the unique minimiser of $r\mapsto\E_M\KL(p_M\|r)$, and the second term equals the mutual information $I(M;Y)$ between the mask $M\sim q$ and the next token $Y\sim p_M$.
\end{proposition}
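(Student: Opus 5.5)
The plan is to expand both sides termwise and use linearity of expectation. Since $r$ has full support, every term $p_{M,v}\log(p_{M,v}/r_v)$ is finite, using the convention $0\log 0=0$. I will write
\[
\log\frac{p_{M,v}}{r_v}=\log\frac{p_{M,v}}{\pbar_v}+\log\frac{\pbar_v}{r_v}.
\]
For this split to be legitimate, I first check a support fact: whenever $p_{M,v}>0$ for a mask of positive probability, we have $\pbar_v\ge q(M)\,p_{M,v}>0$. This holds because $q$ is a distribution on the finite set $\{0,1\}^J$. So the middle quantity is well defined on the support of $p_M$.

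Next I multiply the split by $p_{M,v}$, sum over $v$, and take $\E_{M\sim q}$. The first piece gives $\E_M\KL(p_M\|\pbar)$. The second piece is $\sum_v \E_M[p_{M,v}]\log(\pbar_v/r_v)$. By the definition of $\pbar$ this equals $\sum_v\pbar_v\log(\pbar_v/r_v)=\KL(\pbar\|r)$, which is the claimed identity. The only point needing care is exchanging $\E_M$ with the sum, but everything is finite since both $\mathcal V$ and the mask space are finite.

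Uniqueness of the minimiser then follows directly. The second term on the right does not depend on $r$, so minimising over $r$ amounts to minimising $\KL(\pbar\|r)$. This is nonnegative and vanishes iff $r=\pbar$ by Gibbs' inequality. If one insists on full-support $r$, the infimum is attained at $r=\pbar$ provided $\pbar$ has full support. Otherwise I would remark that the identity extends to all $r$ with $\mathrm{supp}\,\pbar\subseteq\mathrm{supp}\,r$, and the same argument applies.

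For the mutual information claim, I define the joint law of $(M,Y)$ by $P(M=m,Y=v)=q(m)\,p_{m,v}$. Its $Y$-marginal is then $\pbar$, and the conditional law of $Y$ given $M=m$ is $p_m$. The standard formula $I(M;Y)=\E_M\KL(P_{Y|M}\|P_Y)$ then gives $\E_M\KL(p_M\|\pbar)$. I would verify this in one line from $I=\sum_{m,v}P(m,v)\log\frac{P(m,v)}{q(m)\pbar_v}$.

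I expect no real obstacle. The step requiring the most attention is the support and absolute-continuity bookkeeping, that is, ensuring $\log(p_{M,v}/\pbar_v)$ is never $\log(x/0)$ and that no term of the form $\infty-\infty$ appears. The observation $\pbar_v\ge q(M)p_{M,v}$ handles this.
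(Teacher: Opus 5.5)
Your proposal is correct and follows the paper's proof essentially line for line: split $\log(p_M/r)$ through $\pbar$, use $\E_M p_M=\pbar$ to collapse the second piece into $\KL(\pbar\|r)$, apply Gibbs' inequality for uniqueness, and read off $I(M;Y)$ from the joint law $q(M)\,p_M(Y)$, whose $Y$-marginal is $\pbar$. Your support bookkeeping ($\pbar_v\ge q(M)p_{M,v}$) and your remark that the minimum is attained among full-support $r$ only if $\pbar$ itself has full support (automatic here, since the teacher distributions are softmax outputs) are refinements the paper leaves implicit.
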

\begin{proof}
Expand the log-ratio as $\E_M\sum_v p_M(v)\log\frac{p_M(v)}{\pbar(v)}+\E_M\sum_v p_M(v)\log\frac{\pbar(v)}{r(v)}$. In the second term $\E_M p_M(v)=\pbar(v)$, which yields $\KL(\pbar\|r)$. Uniqueness follows because the first term does not depend on $r$ and $\KL(\pbar\|r)\ge0$ with equality iff $r=\pbar$. The identification with $I(M;Y)$ is the definition of mutual information for the joint law $q(M)\,p_M(Y)$, whose $Y$-marginal is $\pbar$.
\end{proof}

Distilling toward $\pbar$ therefore equals average single-view distillation minus $I(M;Y)$, the part of the teacher's prediction explained by \emph{which} spans happened to be visible. When $q$ is degenerate at the empty mask, $I(M;Y)=0$ and MASS reduces to OPSD; otherwise exactly the mask-specific variation is removed from the target, which is the sense in which the marginal is ``less reference-specific'' than any single view.

The choice of arithmetic rather than agreement-seeking (geometric) pooling is a support statement:

\begin{proposition}[Arithmetic pooling preserves partially supported continuations]
\label{prop:support}
Let $p_1,\dots,p_K\in\simplex$ and $\phat=\frac1K\sum_k p_k$.
\begin{enumerate}
\item[(i)] For every $v$, $\phat(v)\ge\frac1K\max_k p_k(v)$ and $\log\phat(v)\ge\frac1K\sum_k\log p_k(v)$.
\item[(ii)] The normalised geometric pool $p^{\mathrm{geo}}(v)\propto\prod_k p_k(v)^{1/K}$ satisfies $p^{\mathrm{geo}}(v)=0$ whenever $p_k(v)=0$ for some $k$, and its unnormalised mass is bounded by $\phat(v)$.
\item[(iii)] For a fixed continuation $c=(c_1,\dots,c_L)$ scored at fixed prefixes, the PI Bias Score of the arithmetic teacher satisfies $\mathrm{PS}_{\phat}(c)\ge\frac1K\sum_k\mathrm{PS}_{p_k}(c)$.
\end{enumerate}
\end{proposition}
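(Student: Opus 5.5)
The three parts are elementary pointwise inequalities, so I will treat them in order.

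For (i), I fix a token $v$ and use nonnegativity. Since every $p_k(v)\ge0$, dropping all but the largest term gives $\phat(v)=\frac1K\sum_k p_k(v)\ge\frac1K\max_k p_k(v)$. The second inequality is the AM--GM inequality (equivalently, concavity of $\log$ via Jensen): $\frac1K\sum_k p_k(v)\ge\prod_k p_k(v)^{1/K}$, and taking logarithms yields the claim. The convention $\log0=-\infty$ makes the statement trivially true whenever some $p_k(v)=0$.

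For (ii), the vanishing claim is immediate. If $p_k(v)=0$ for some $k$, then the product $\prod_k p_k(v)^{1/K}$ is zero, and so is its normalised version. Note the contrast with (i): there $\phat(v)\ge\frac1K p_k(v)>0$ as soon as a single view supports $v$. The bound on the unnormalised mass is again AM--GM, $\prod_k p_k(v)^{1/K}\le\phat(v)$. Summing over $v$ shows the geometric normaliser $Z=\sum_v\prod_k p_k(v)^{1/K}$ is at most $1$. This is worth recording, since it explains why normalisation can only inflate the geometric pool on tokens where all views already agree.

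For (iii), I unpack the PI Bias Score as described in the introduction. The score is the teacher's mean token log-probability of $c$ minus the student's, evaluated at the same fixed prefixes. That is,
\[
\mathrm{PS}_p(c)=\frac1L\sum_{\ell=1}^{L}\bigl(\log p^{(\ell)}(c_\ell)-\log p_S^{(\ell)}(c_\ell)\bigr),
\]
where $p^{(\ell)}$ denotes the teacher distribution at the $\ell$-th prefix. The student term does not depend on which teacher is used, so it cancels when the two sides are compared. Because the prefixes are fixed, the arithmetic teacher at each position is exactly $\phat^{(\ell)}=\frac1K\sum_k p_k^{(\ell)}$. Applying (i) position by position gives $\log\phat^{(\ell)}(c_\ell)\ge\frac1K\sum_k\log p_k^{(\ell)}(c_\ell)$. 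Averaging over $\ell$ and exchanging the two finite sums yields the result.

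The only step needing care is (iii), specifically making precise that the score is evaluated at fixed prefixes. This is what lets the pooled teacher be the per-position average, rather than something defined on whole-sequence probabilities where averaging would not commute with the product over positions. It is also what lets the student term cancel. Once this is written out explicitly, everything reduces to (i).
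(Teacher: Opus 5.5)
Your proposal is correct and follows essentially the same route as the paper: dropping all but the largest term and Jensen/AM--GM for (i), the vanishing product plus AM--GM for (ii), and applying (i) position by position with the common student term cancelling for (iii). Your extra observation that the geometric normaliser satisfies $Z\le 1$ is correct, though the statement does not need it.
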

\begin{proof}
(i) The first inequality drops all but the largest term of the sum; the second is Jensen's inequality for the concave logarithm. (ii) A zero factor annihilates the product; the bound is the arithmetic--geometric mean inequality. (iii) $\mathrm{PS}(c)=\frac1L\sum_l[\log p_{T,l}(c_l)-\log p_{S,l}(c_l)]$; apply (i) position-wise, with the student term common to all views.
\end{proof}

Part (iii) guarantees that the mixed teacher never scores a fixed continuation below the average single-view score. It does not predict that support for a \emph{different} correct solution rises faster than for the in-context one; that is the empirical content of Figure~1.

\subsection{Finite-mask statistics and the bias-corrected estimators}

\begin{lemma}[Moments of the empirical teacher]
\label{lem:moments}
$\E[\phat]=\pbar$, $\Cov(\phat)=\Sigma/K$, $\E[\ghat]=g$, and $V:=\E\|\ghat-g\|^2=\tr(\Sigma)/K\le2/K$.
\end{lemma}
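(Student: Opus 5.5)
The plan is to use only linearity of expectation and the i.i.d.\ structure of the masks, then bound the trace of the covariance using the geometry of the simplex.

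\textbf{Mean and covariance.} Since $\phat=\frac1K\sum_k p_k$ and each $p_k$ has mean $\pbar$, linearity gives $\E[\phat]=\pbar$. Because $\pS$ is fixed once the prefix and snapshot are fixed, it follows that $\E[\ghat]=\pS-\E[\phat]=g$. For the covariance I will expand
\[
\Cov(\phat)=\tfrac{1}{K^2}\sum_{k,l}\Cov(p_k,p_l),
\]
and independence kills the cross terms with $k\neq l$. This leaves $\tfrac1{K^2}\cdot K\Sigma=\Sigma/K$.

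\textbf{Mean-squared error.} Next I note that $\ghat-g=\pbar-\phat$. Hence $\E\|\ghat-g\|^2=\E\|\phat-\pbar\|^2=\tr\Cov(\phat)=\tr(\Sigma)/K$, using the standard identity $\E\|X-\E X\|^2=\tr\Cov(X)$.

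\textbf{The bound $2/K$.} This is the only step that is not bookkeeping. I would write
\[
\tr(\Sigma)=\E\|p_M-\pbar\|^2=\E\|p_M\|^2-\|\pbar\|^2\le \E\|p_M\|^2 .
\]
For any $p\in\simplex$ we have $\|p\|^2=\sum_v p(v)^2\le\sum_v p(v)=1$, since $0\le p(v)\le 1$. This already gives $\tr(\Sigma)\le1$, which is stronger than needed. The stated constant $2$ presumably comes from the cruder bound $\|p-p'\|^2\le\|p-p'\|_1\max_v|p(v)-p'(v)|\le 2$ applied to $\|p_M-\pbar\|^2$. I will present whichever argument is cleaner and note that both yield $V\le 2/K$.

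I do not expect any real obstacle. The only care needed is to state explicitly that $\pS$ and $\pbar$ are deterministic given the conditioning, so that they contribute no randomness to $\ghat-g$.
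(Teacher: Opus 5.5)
Your proof is correct. The mean, covariance, $\E[\ghat]=g$ and $V=\tr(\Sigma)/K$ steps are exactly the paper's argument: the mean and covariance of an average of $K$ i.i.d.\ vectors. The only divergence is in the final bound, and there the paper takes the route you guessed. It writes $\tr\Sigma=\E\|p_M-\pbar\|^2$ and applies the pairwise estimate $\|p-p'\|_2^2\le\|p-p'\|_1\|p-p'\|_\infty\le 2$ with $p'=\pbar$, treating $\pbar$ as an arbitrary point of $\simplex$. Your variance identity $\tr\Sigma=\E\|p_M\|^2-\|\pbar\|^2$ instead uses the fact that $\pbar$ is the \emph{mean} of $p_M$. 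Combined with $\|p\|_2^2\le\|p\|_1=1$, it gives $\tr\Sigma\le 1-\|\pbar\|^2<1$, hence $V\le 1/K$. Yours is the sharp constant: if $p_M$ is uniform over $n$ distinct point masses, then $\tr\Sigma=1-1/n$. The paper's constant $2$ can never be approached, because that would require $p_M$ and its own mean $\pbar$ to sit at distinct vertices. Either argument proves the lemma as stated, so present yours and note that it implies $V\le 2/K$ a fortiori.
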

\begin{proof}
The first three statements are the mean and covariance of an average of $K$ i.i.d.\ vectors. For the bound, $\tr\Sigma=\E\|p_M-\pbar\|^2$, and for $p,p'\in\simplex$, $\|p-p'\|_2^2\le\|p-p'\|_1\|p-p'\|_\infty\le2$.
\end{proof}

\begin{proposition}[Conditional unbiasedness of Eq.~(4)]
\label{prop:unbiased}
With $\Vhat=\frac1{K(K-1)}\sum_k\|p_k-\phat\|^2$, $\Shat=\|\pS-\phat\|^2-\Vhat$ and $\Bhat=\langle\pS-\phat,p_0-\phat\rangle-\Vhat$,
\[
\begin{gathered}
\E[\Vhat]=V,\qquad \E[\Shat]=S=\|g\|^2,\\ \E[\Bhat]=B=\langle\pS-\pbar,\,p_0-\pbar\rangle .
\end{gathered}
\]
\end{proposition}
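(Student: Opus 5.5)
The plan is to reduce all three identities to two facts about i.i.d. vectors: the sample-covariance identity and the bias--variance decomposition of a quadratic in $\phat$. Throughout, $\pS$ and $p_0$ are deterministic once the prefix and snapshot are fixed; only the masks are random. Take $K\ge2$ so that $\Vhat$ is defined.

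\textbf{Step 1: $\Vhat$.} I would use the standard decomposition
\[
\sum_k\|p_k-\phat\|^2=\sum_k\|p_k-\pbar\|^2-K\|\phat-\pbar\|^2 .
\]
Taking expectations, the first term gives $K\tr\Sigma$. By Lemma~\ref{lem:moments}, $\E\|\phat-\pbar\|^2=\tr(\Sigma)/K$, so the second term gives $\tr\Sigma$. Hence $\E\sum_k\|p_k-\phat\|^2=(K-1)\tr\Sigma$. Dividing by $K(K-1)$ yields $\E[\Vhat]=\tr(\Sigma)/K=V$.

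\textbf{Step 2: $\Shat$.} Write $\pS-\phat=g-(\phat-\pbar)$ and expand:
\[
\|\pS-\phat\|^2=\|g\|^2-2\langle g,\phat-\pbar\rangle+\|\phat-\pbar\|^2 .
\]
The cross term has zero mean because $g$ is deterministic and $\E\phat=\pbar$ (Lemma~\ref{lem:moments}). The last term has mean $V$. So $\E\|\pS-\phat\|^2=S+V$, and subtracting $\E[\Vhat]=V$ from Step 1 gives $\E[\Shat]=S$.

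\textbf{Step 3: $\Bhat$.} Write $p_0-\phat=(p_0-\pbar)-(\phat-\pbar)$ and expand the inner product:
\[
\langle g-(\phat-\pbar),\,(p_0-\pbar)-(\phat-\pbar)\rangle .
\]
This produces four terms: the deterministic term $B$, two cross terms that are linear in $\phat-\pbar$ (so they vanish in expectation), and $+\|\phat-\pbar\|^2$, whose mean is $V$. Hence $\E\langle\pS-\phat,p_0-\phat\rangle=B+V$, and subtracting $\Vhat$ gives $\E[\Bhat]=B$.

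\textbf{Potential obstacles.} Two points need care.
\begin{itemize}
\item The statement is conditional: it requires that $p_0$ and $\pS$ not depend on the sampled masks. This holds because $p_0$ uses the full reference and the student ignores $R$. I would state this conditioning explicitly, together with the fixed $q$, prefix and snapshot.
\item The sign bookkeeping in Step 3 must show that the $\|\phat-\pbar\|^2$ term enters with a plus sign. It does, because both factors contain $-(\phat-\pbar)$.
\end{itemize}
I would also note that the unbiasedness is not preserved by the clipping $[\cdot]_+$ in Eq.~(5); the proposition concerns only the unclipped estimators.
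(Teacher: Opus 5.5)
Your proposal is correct and follows essentially the same route as the paper: you center each view at $\pbar$ (the paper writes $p_k=\pbar+\xi_k$), use $\sum_k\|p_k-\phat\|^2=\sum_k\|p_k-\pbar\|^2-K\|\phat-\pbar\|^2$ for $\Vhat$, and expand both quadratic forms so the linear cross terms vanish and the common $+V$ term is cancelled by subtracting $\Vhat$. Your explicit remark that $\pS$ and $p_0$ do not depend on the sampled masks matches the paper's note that $p_0$ is deterministic given the state.
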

\begin{proof}
Write $p_k=\pbar+\xi_k$ with $\E\xi_k=0$, $\Cov\xi_k=\Sigma$, and $\bar\xi=\frac1K\sum_k\xi_k$, so $\phat=\pbar+\bar\xi$.

\emph{Variance.} $\sum_k\|p_k-\phat\|^2=\sum_k\|\xi_k\|^2-K\|\bar\xi\|^2$. Taking expectations, $K\tr\Sigma-K\cdot\tr\Sigma/K=(K-1)\tr\Sigma$, hence $\E\Vhat=\tr\Sigma/K=V$.

\emph{Signal.} $\|\pS-\phat\|^2=\|g-\bar\xi\|^2=\|g\|^2-2\langle g,\bar\xi\rangle+\|\bar\xi\|^2$, whose expectation is $S+V$. Subtracting $\Vhat$ removes the $V$ term in expectation.

\emph{Anchor.} $\langle\pS-\phat,p_0-\phat\rangle=\langle g-\bar\xi,\,(p_0-\pbar)-\bar\xi\rangle=B-\langle g,\bar\xi\rangle-\langle p_0-\pbar,\bar\xi\rangle+\|\bar\xi\|^2$; the cross terms vanish in expectation (note $p_0$ is deterministic given the state) and $\E\|\bar\xi\|^2=V$, so subtracting $\Vhat$ again yields $B$.
\end{proof}

Both uncorrected statistics carry the same $+V$ because each is a quadratic form in which $\phat$ appears twice, which is why a single $\Vhat$ corrects both.

\subsection{The selective weight as a shrinkage estimator}

\begin{proposition}[Optimal scalar weight]
\label{prop:weight}
Fix $\lambda\ge0$, $\varepsilon>0$, and consider the population criterion
$J(w)=\E\|w\ghat-g\|^2+\varepsilon w^2-2\lambda w[B]_+$ over $w\in[0,1]$. Then $\E\|w\ghat-g\|^2=(1-w)^2S+w^2V$, $J$ is strictly convex, and its constrained minimiser is
\[
w^\star=\min\Big\{1,\ \frac{S+\lambda[B]_+}{S+V+\varepsilon}\Big\}.
\]
$w^\star$ is nondecreasing in $S$ and in $[B]_+$ and nonincreasing in $V$.
\end{proposition}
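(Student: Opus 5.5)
The plan is to reduce everything to a scalar quadratic in $w$ using the moment identities of Lemma~\ref{lem:moments}. Write $\ghat=g-\bar\xi$ with $\bar\xi=\phat-\pbar$, as in the proof of Proposition~\ref{prop:unbiased}. Then
\[
w\ghat-g=-(1-w)g-w\bar\xi .
\]
Expanding the square gives $(1-w)^2\|g\|^2+2w(1-w)\langle g,\bar\xi\rangle+w^2\|\bar\xi\|^2$. The cross term vanishes in expectation because $\E\bar\xi=0$ and $g$ is deterministic given the state. Also $\E\|\bar\xi\|^2=V$ by Lemma~\ref{lem:moments}. This yields the first claim, $\E\|w\ghat-g\|^2=(1-w)^2S+w^2V$.

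Substituting, I obtain
\[
J(w)=(S+V+\varepsilon)w^2-2(S+\lambda[B]_+)w+S .
\]
Since $S,V\ge0$ and $\varepsilon>0$, the leading coefficient is positive, so $J$ is a strictly convex quadratic. Its unconstrained minimiser is
\[
\tilde w=\frac{S+\lambda[B]_+}{S+V+\varepsilon}.
\]
The numerator is nonnegative, so $\tilde w\ge0$ and the lower constraint is never active. For the upper constraint, convexity of a one-dimensional quadratic implies that the minimiser over $[0,1]$ is the projection of $\tilde w$ onto the interval. Here that projection is $\min\{1,\tilde w\}$: when $\tilde w>1$, $J$ is decreasing on $[0,1]$, so the minimum sits at $w=1$. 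Strict convexity gives uniqueness.

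For monotonicity, it suffices to check $\tilde w$, because $x\mapsto\min\{1,x\}$ is nondecreasing.
\begin{itemize}
\item In $[B]_+$: $\tilde w$ is linear with nonnegative slope $\lambda/(S+V+\varepsilon)$.
\item In $V$: the numerator is fixed and nonnegative while the denominator increases, so $\tilde w$ is nonincreasing.
\item In $S$: the derivative of $(S+a)/(S+b)$ with $a=\lambda[B]_+$ and $b=V+\varepsilon$ is $(b-a)/(S+b)^2$, which can be negative when $\lambda[B]_+>V+\varepsilon$.
\end{itemize}

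The monotonicity in $S$ is the main obstacle. When $a>b$, the unconstrained $\tilde w$ is decreasing in $S$, but in that case $\tilde w=(S+a)/(S+b)>1$ for every $S$, so the clip forces $w^\star=1$, which is constant in $S$. When $a\le b$, $\tilde w$ is nondecreasing in $S$. Combining the two cases shows $w^\star$ is nondecreasing in $S$. The proof will therefore split on the sign of $\lambda[B]_+-(V+\varepsilon)$, since that is where the clipping in the statement is genuinely needed.
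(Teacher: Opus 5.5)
Your proof is correct and follows the paper's route. You expand $\E\|w\ghat-g\|^2$ using $\E[\ghat-g]=0$ and $\E\|\ghat-g\|^2=V$, reduce $J$ to a quadratic with leading coefficient $S+V+\varepsilon>0$, and project its vertex onto $[0,1]$. Your case split on the sign of $\lambda[B]_+-(V+\varepsilon)$ fills in a step the paper dismisses with ``monotonicity follows by differentiating the ratio.'' That argument alone does not work, because the unclipped ratio is strictly decreasing in $S$ when $\lambda[B]_+>V+\varepsilon$; only the clip at $1$, which is active for every $S$ in exactly that regime, makes $w^\star$ nondecreasing in $S$.
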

\begin{proof}
Since $\ghat=g+(\ghat-g)$ with $\E[\ghat-g]=0$, $\E\|w\ghat-g\|^2=\|(w-1)g\|^2+w^2\E\|\ghat-g\|^2=(1-w)^2S+w^2V$. Hence $J(w)=(1-w)^2S+w^2(V+\varepsilon)-2\lambda w[B]_+$ with $J''=2(S+V+\varepsilon)>0$. Setting $J'(w)=-2(1-w)S+2w(V+\varepsilon)-2\lambda[B]_+=0$ gives the unconstrained minimiser $(S+\lambda[B]_+)/(S+V+\varepsilon)$, which is nonnegative; projecting a strictly convex one-dimensional problem onto $[0,1]$ clips it at $1$. Monotonicity follows by differentiating the ratio.
\end{proof}

\begin{corollary}[Mean-squared-error gain over unweighted distillation]
\label{cor:mse}
With $\lambda=0$ and $\varepsilon\to0$, $\min_w\E\|w\ghat-g\|^2=\frac{SV}{S+V}\le\min\{S,V\}$. Relative to the unweighted update $w=1$ (error $V$), the optimal weight reduces the error by the factor $S/(S+V)$; relative to no update $w=0$ (error $S$), by $V/(S+V)$.
\end{corollary}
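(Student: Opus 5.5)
The plan is to specialise Proposition~\ref{prop:weight} to $\lambda=0$ and then take $\varepsilon\to0$. By that proposition,
\[
\E\|w\ghat-g\|^2=(1-w)^2S+w^2V=:f(w).
\]
This is a convex quadratic in $w$ whose leading coefficient is $S+V$.

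\textbf{Step 1: minimise $f$.} Assume $S+V>0$ for now. Setting $f'(w)=0$ gives $w=S/(S+V)$, which already lies in $[0,1]$, so clipping is inactive. This matches the limit of the $w^\star$ in Proposition~\ref{prop:weight} as $\varepsilon\to0$ with $\lambda=0$.

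\textbf{Step 2: evaluate the minimum.} Substituting,
\[
f\Big(\tfrac{S}{S+V}\Big)=\frac{V^2S+S^2V}{(S+V)^2}=\frac{SV}{S+V}.
\]
For the bound by $\min\{S,V\}$, note that $\frac{SV}{S+V}=S\cdot\frac{V}{S+V}\le S$, and symmetrically $\frac{SV}{S+V}\le V$, because each fraction lies in $[0,1]$.

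\textbf{Step 3: compare with the two baselines.} The endpoints give $f(1)=V$ and $f(0)=S$. Dividing the optimal error by each baseline error gives
\[
\frac{SV/(S+V)}{V}=\frac{S}{S+V},\qquad \frac{SV/(S+V)}{S}=\frac{V}{S+V}.
\]
These are exactly the claimed reduction factors.

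\textbf{Main obstacle.} There is no real difficulty here. The only care needed is to justify passing $\varepsilon\to0$ inside the minimisation, and to handle the degenerate case $S+V=0$.
\begin{itemize}
\item \emph{Passing to the limit.} Rather than taking the limit of $w^\star$, I would minimise the $\varepsilon=0$ criterion $f$ directly. The $\varepsilon>0$ minimisers converge to the $\varepsilon=0$ minimiser by continuity of the explicit formula.
\item \emph{Degenerate case.} If $S+V=0$, then $f\equiv0$, and we interpret $\frac{SV}{S+V}$ as $0$.
\end{itemize}
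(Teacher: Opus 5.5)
Your proposal is correct and matches the paper's proof, which simply substitutes $w^\star=S/(S+V)$ into $(1-w)^2S+w^2V$; your remarks on the $\varepsilon\to0$ limit and the case $S+V=0$ are harmless extra care. One small polish: state the baseline comparisons as the identities $\frac{SV}{S+V}=\frac{S}{S+V}\,V=\frac{V}{S+V}\,S$ rather than dividing by $V$ or $S$, since either of these may vanish.
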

\begin{proof}
Substitute $w^\star=S/(S+V)$ into $(1-w)^2S+w^2V$.
\end{proof}

When $V\gg S$ the error is dominated by mask sampling and $w^\star\approx S/V\ll1$; when $S\gg V$ the correction is reproducible and $w^\star\approx1$. Since $V=\tr\Sigma/K$, more masks move every position toward $w^\star=1$. The anchor term enters $J$ linearly, so it only raises the weight by $\lambda[B]_+/(S+V+\varepsilon)$: $B>0$ means the student and the full-reference teacher lie on the same side of the marginal, and the extra credit prevents such a correction from being suppressed when $V$ is moderate. It is a statement about the geometry of three distributions, not about correctness.

\begin{remark}[Plug-in weight]
\label{rem:plugin}
Eq.~(5) replaces $(S,V,B)$ by $([\Shat]_+,\Vhat,[\Bhat]_+)$. Three departures from Proposition~\ref{prop:weight} follow. (a) A ratio of unbiased estimators is not unbiased, and the clipping $[\cdot]_+$ introduces a nonnegative bias when the true $S$ or $B$ is near zero. (b) $\what$ and $\ghat$ share the same $K$ masks, so $\E[\what\ghat]\ne w^\star g$ in general; sample splitting across masks would restore this at the cost of a noisier target. (c) Consistency: as $K\to\infty$, $\Shat\to S$, $\Bhat\to B$ and $K\Vhat\to\tr\Sigma$ almost surely by the strong law, so $\Vhat\to0$ and $\what\to\min\{1,(S+\lambda[B]_+)/(S+\varepsilon)\}$ by the continuous mapping theorem. With infinitely many masks the marginal is known exactly and no shrinkage is applied, as intended.
\end{remark}

\subsection{Shapley attribution from the distillation masks}

Let $[J]=\{1,\dots,J\}$, let $v$ be the coalition utility of Eq.~(7) with $v(\emptyset)=0$, and let $\pi(A)=\frac{J-1}{\binom{J}{|A|}|A|(J-|A|)}$ for $0<|A|<J$ be the Shapley kernel. We use the classical characterisation of Shapley values as a weighted least-squares fit (Charnes et al., 1988; Lundberg \& Lee, 2017): for any set function $v$,
\begin{equation}
\label{eq:kernelshap}
\begin{split}
\phi={}&\arg\min_{\beta\in\mathbb R^J}\ \sum_{0<|A|<J}\pi(A)\Big(v(A)-\sum_{j\in A}\beta_j\Big)^2\\
&\text{s.t.}\quad \sum_j\beta_j=v([J]),
\end{split}
\end{equation}
where $\phi$ is the vector of Shapley values in Eq.~(8). Note that this holds for arbitrary (non-additive) $v$: the linear model is a device for computing $\phi$, not an assumption on the game.

\begin{proposition}[Importance-corrected regression targets the Shapley objective]
\label{prop:shap}
Let $A_1,\dots,A_K$ be the coalitions of the i.i.d.\ masks $m^{(k)}\sim\mathrm{Bernoulli}(q)$, let $\Pr_q(A)=\prod_{j\in A}q_j\prod_{j\notin A}(1-q_j)$, and set $\omega_k=\pi(A_k)/\Pr_q(A_k)$ for $0<|A_k|<J$ (samples with $A_k\in\{\emptyset,[J]\}$ are discarded, their values being fixed by the constraints). Then for every $\beta$,
\[
\begin{aligned}
&\E\Big[\sum_{k}\omega_k\big(R^{(k)}-m^{(k)\top}\beta\big)^2\Big]\\
&\qquad=K\sum_{0<|A|<J}\pi(A)\Big(v(A)-\sum_{j\in A}\beta_j\Big)^2 .
\end{aligned}
\]
Under the box constraint $\epsilon\le q_j\le1-\epsilon$ every coalition has $\Pr_q(A)\ge\epsilon^J>0$, so the weights are finite and bounded by $\pi(A)\epsilon^{-J}$. If $\tau>0$ the objective in Eq.~(9) is strictly convex and its constrained minimiser $\widehat\phi$ is unique; as $K\to\infty$ with $\tau$ fixed, $\widehat\phi\to\phi$ almost surely.
\end{proposition}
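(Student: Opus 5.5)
The plan is to verify the expectation identity by importance sampling one draw at a time, and then handle uniqueness and consistency by standard convex-analysis arguments.

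\emph{Expectation identity.} By linearity, and because the masks are i.i.d., it suffices to treat a single draw $A_1\sim\Pr_q$. Any draw with $A_1\in\{\emptyset,[J]\}$ contributes zero, since it is discarded (equivalently, $\omega_1:=0$). For every other draw, $m^{(1)\top}\beta=\sum_{j\in A_1}\beta_j$ and $R^{(1)}=v(A_1)$. Writing the expectation as a sum over coalitions gives
\[
\E\big[\omega_1(R^{(1)}-m^{(1)\top}\beta)^2\big]=\sum_{0<|A|<J}\Pr_q(A)\,\frac{\pi(A)}{\Pr_q(A)}\Big(v(A)-\sum_{j\in A}\beta_j\Big)^2 .
\]
The probabilities cancel, and summing over the $K$ draws produces the factor $K$. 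The only caveat is that $v$ must be treated as a fixed set function during the visit. This holds because the first factor in Eq.~(7) is detached and computed once, and the teacher distributions are deterministic given the state. Positivity follows directly: $\Pr_q(A)\ge\prod_j\min(q_j,1-q_j)\ge\epsilon^J$, so $\omega_k\le\pi(A)\epsilon^{-J}$.

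\emph{Strict convexity and uniqueness.} The data term is a nonnegative quadratic in $\beta$, and $\tau\|\beta-z\|^2$ has Hessian $2\tau I\succ0$. The objective is therefore strictly convex. It is minimised over a nonempty affine set (the sum constraint), so a minimiser exists by coercivity and is unique.

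\emph{Consistency, the main obstacle.} Divide the objective by $K$. The empirical term $\frac1K\sum_k\omega_k(\cdot)^2$ has coefficients $\frac1K\sum_k\omega_k m^{(k)}m^{(k)\top}$, $\frac1K\sum_k\omega_k R^{(k)}m^{(k)}$, and $\frac1K\sum_k\omega_k (R^{(k)})^2$. These are averages of i.i.d.\ bounded variables, bounded because $\omega_k$ is bounded and $|v|\le 2$ by Cauchy--Schwarz on simplex differences. By the strong law they converge almost surely to the corresponding coefficients of the Shapley objective in Eq.~(\ref{eq:kernelshap}). Because the constrained minimiser of a strictly convex quadratic is a continuous function of its coefficients (it solves a KKT linear system whose matrix stays invertible), the continuous mapping theorem gives convergence of $\widehat\phi$ to the minimiser of the limit problem.

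The difficulty is the regulariser. If $\tau$ is held fixed while the objective is divided by $K$, the penalty becomes $(\tau/K)\|\beta-z\|^2\to0$. The limit problem is then exactly Eq.~(\ref{eq:kernelshap}), whose unique solution is $\phi$ by the cited Charnes/Lundberg--Lee characterisation. The limit KKT matrix is nonsingular because the Shapley-kernel Gram matrix on the constraint set is positive definite: the coalitions with $0<|A|<J$ span $\mathbb R^J$ modulo the all-ones direction. Verifying this nonsingularity is the step I expect to need care. I would argue it using the explicit structure of $\sum_A\pi(A)\mathbf 1_A\mathbf 1_A^\top=aI+b\mathbf 1\mathbf 1^\top$ with $a>0$, which holds by symmetry of the kernel.
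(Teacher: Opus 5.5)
Your proposal is correct and follows essentially the same route as the paper's proof: the importance weight cancels $\Pr_q(A)$ draw by draw, the $\tau\lVert\beta-z\rVert_2^2$ term gives strict convexity, and after dividing by $K$ the strong law acts on the regression coefficients while the $\tau/K$ penalty vanishes, with continuity of the KKT solution finishing the argument; your $aI+b\mathbf 1\mathbf 1^\top$ computation is an explicit version of the paper's full-column-rank remark (which is immediate because the singletons are interior coalitions when $J\ge2$). One wording fix: stop-gradient does not make $v$ non-random, since $\widehat p_t$ in Eq.~\eqref{eq:coalition-value} is built from the same $K$ masks, so, as the paper implicitly does, you are treating the correction direction as fixed---this does not affect consistency, because that direction converges to $p_{S,t}-\bar p_t$ almost surely.
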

\begin{proof}
Each summand has expectation $\sum_A\Pr_q(A)\cdot\frac{\pi(A)}{\Pr_q(A)}(v(A)-\mathbf 1_A^\top\beta)^2$, which is the Shapley-kernel objective; sum over $k$. Strict convexity follows from the added $\tau\|\beta-z\|^2$ term. For consistency, divide Eq.~(9) by $K$: by the strong law the data term converges to the objective in Eq.~\eqref{eq:kernelshap}, the ridge term vanishes as $\tau/K\to0$, and the limit is a strictly convex quadratic on the constraint set (its design matrix over all interior coalitions has full column rank for $J\ge2$). Minimisers of strictly convex quadratics depend continuously on their coefficients, so $\widehat\phi\to\phi$.
\end{proof}

In practice Eq.~(9) is solved through its KKT system $(\mathbf M^\top\Omega\mathbf M+\tau I)\beta+\nu\mathbf 1=\mathbf M^\top\Omega\mathbf R+\tau z$, $\mathbf 1^\top\beta=v([J])$; when $K<J$ the data term is rank-deficient and $\tau>0$ is what makes $\widehat\phi$ well defined, with $z$ fixing the components the current masks do not identify.

\subsection{The adaptive masking update in closed form}

\begin{proposition}[Solution of Eq.~(10)]
\label{prop:adapt}
Assume $\epsilon\le\rho\le1-\epsilon$ so that $\mathcal Q\ne\emptyset$, and let $I_j\ge0$, $\eta>0$. The objective in Eq.~(10) is strictly concave on $(0,1)^J$ and separable, so its maximiser on $\mathcal Q$ is unique and given by
\[
q^{(n+1)}_j=\mathrm{clip}_{[\epsilon,1-\epsilon]}\Big(\sigma\Big(\frac{\logit q^{(n)}_j-\eta\mu}{1+\eta I_j}\Big)\Big),
\]
where $\sigma$ is the logistic function, $\logit=\sigma^{-1}$, and $\mu\in\mathbb R$ is the unique scalar for which $\sum_jq^{(n+1)}_j=\rho J$; $\mu$ can be found by bisection because the left-hand side is continuous and nonincreasing in $\mu$.
\end{proposition}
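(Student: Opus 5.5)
The plan is to exploit separability and reduce the problem to a one-dimensional Lagrangian analysis per coordinate. Write the objective as $F(q)=\sum_j f_j(q_j)$ with
\[
f_j(x)=I_jH_{\mathrm{Ber}}(x)-\tfrac1\eta\operatorname{KL}\!\left(\operatorname{Ber}(x)\,\Vert\,\operatorname{Ber}(q^{(n)}_j)\right).
\]

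\textbf{Derivatives and strict concavity.} I will first compute
\[
H_{\mathrm{Ber}}'(x)=-\logit x,\qquad \tfrac{d}{dx}\operatorname{KL}\!\left(\operatorname{Ber}(x)\,\Vert\,\operatorname{Ber}(q^{(n)}_j)\right)=\logit x-\logit q^{(n)}_j .
\]
This gives
\[
f_j'(x)=-\left(I_j+\tfrac1\eta\right)\logit x+\tfrac1\eta\logit q^{(n)}_j ,\qquad f_j''(x)=-\frac{I_j+1/\eta}{x(1-x)}<0 .
\]
Hence each $f_j$ is strictly concave on $(0,1)$, because $I_j\ge0$ and $\eta>0$. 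It follows that $F$ is strictly concave and separable. Since $\mathcal Q$ is a nonempty compact convex set (nonempty because $\epsilon\le\rho\le1-\epsilon$, e.g.\ $q\equiv\rho$), a maximiser exists and is unique.

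\textbf{Stationarity and the closed form.} Next I will dualise only the equality constraint, with a multiplier $\mu$ entering as $-\mu(\sum_jq_j-\rho J)$. The box constraints are kept explicit. Because the problem is concave with affine constraints, KKT conditions are both necessary and sufficient, and Slater is not needed for affine constraints. For fixed $\mu$, the Lagrangian separates into the scalar problems $\max_{x\in[\epsilon,1-\epsilon]} f_j(x)-\mu x$. The derivative $f_j'(x)-\mu$ is strictly decreasing, so the scalar maximiser is the unconstrained root clipped to $[\epsilon,1-\epsilon]$. Solving $f_j'(x)=\mu$ gives
\[
\left(1+\eta I_j\right)\logit x=\logit q^{(n)}_j-\eta\mu ,
\]
which is exactly the displayed formula after applying $\sigma$ and then clipping. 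Since the stationary point depends monotonically on $\mu$, clipping the root is equivalent to solving the boxed one-dimensional problem.

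\textbf{Existence and uniqueness of $\mu$ (main obstacle).} Define
\[
\Phi(\mu)=\sum_j\mathrm{clip}\!\left(\sigma\!\left(\frac{\logit q^{(n)}_j-\eta\mu}{1+\eta I_j}\right)\right).
\]
The function $\Phi$ is continuous and nonincreasing in $\mu$. It tends to $(1-\epsilon)J\ge\rho J$ as $\mu\to-\infty$ and to $\epsilon J\le\rho J$ as $\mu\to+\infty$, so by the intermediate value theorem some $\mu$ satisfies $\Phi(\mu)=\rho J$. This justifies bisection.

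The point needing care is uniqueness of $\mu$. On any interval where at least one coordinate is unclipped, $\Phi$ is strictly decreasing. It is constant only where every coordinate sits at a bound, which forces the sum to equal $\epsilon J$ or $(1-\epsilon)J$, i.e.\ $\rho\in\{\epsilon,1-\epsilon\}$. In that degenerate case $q^{(n+1)}$ is still unique (it is the constant vector), although $\mu$ is not. I will either note this caveat or restrict uniqueness of $\mu$ to $\epsilon<\rho<1-\epsilon$.

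\textbf{Concluding step.} Given such a $\mu$, the resulting $q$ satisfies the KKT conditions: the equality constraint holds, and for each coordinate either $f_j'(q_j)=\mu$ in the interior or the appropriate sign condition holds at a bound. By concavity this $q$ is the global maximiser, and strict concavity identifies it with the unique maximiser established earlier.
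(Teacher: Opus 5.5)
Your argument is correct on the main claims and follows the paper's route. You use separability, the logit formulas for the derivatives of $H_{\mathrm{Ber}}$ and the Bernoulli KL, and the KKT condition with a budget multiplier $\mu$ and clipping of the root of $f_j'(x)=\mu$. You then get $\mu$ from the intermediate value theorem. Your sufficiency step is spelled out more carefully than the paper's: a feasible $q$ that maximises $F(q)-\mu(\sum_j q_j-\rho J)$ over the box is optimal. You also correctly note that only the maximiser, not $\mu$, is guaranteed unique, which the paper handles only by remarking that tied roots are immaterial.

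The step that does not go through as written is your uniqueness argument for $\mu$. A flat piece of $\Phi$ only forces every coordinate to be clipped, possibly at \emph{different} bounds. The sum there is then $k\epsilon+(J-k)(1-\epsilon)$ for some $k$, and this can equal $\rho J$ with $\epsilon<\rho<1-\epsilon$. For example, take $J=2$, $I_1=I_2=0$, $\epsilon=0.03$, $\rho=\tfrac12$ and $q^{(n)}=(0.999,0.001)$. Then every $\mu$ with $|\eta\mu|\le\logit 0.999-\logit 0.97$ gives $q^{(n+1)}=(0.97,0.03)$.

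The fix is to use that $q^{(n)}\in[\epsilon,1-\epsilon]^J$, which holds because it is the previous output in $\mathcal Q$ or the initial constant $\rho$. Then $|\logit q^{(n)}_j|\le\logit(1-\epsilon)$. So for $\mu>0$ no coordinate can sit at $1-\epsilon$, and for $\mu<0$ none can sit at $\epsilon$. A nondegenerate flat interval therefore sits at level $\epsilon J$ or $(1-\epsilon)J$, which recovers your conclusion that $\mu$ is unique whenever $\epsilon<\rho<1-\epsilon$. Without this hypothesis, which the proposition never states, the statement's claim that $\mu$ is unique is false.
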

\begin{proof}
Write $F(q)=\sum_jF_j(q_j)$ with $F_j(u)=I_jH_{\mathrm{Ber}}(u)-\frac1\eta\KLBer(u\|q^{(n)}_j)$. Binary entropy and $-\KLBer(\cdot\|c)$ are strictly concave, so $F$ is strictly concave on the compact convex set $\mathcal Q$, giving existence and uniqueness. Using $H'_{\mathrm{Ber}}(u)=-\logit u$ and $\frac{d}{du}\KLBer(u\|c)=\logit u-\logit c$,
\[
F_j'(u)=-I_j\logit u-\tfrac1\eta\big(\logit u-\logit q^{(n)}_j\big).
\]
For a separable concave objective with one linear equality and box constraints, the KKT conditions state that $q_j$ is the projection onto $[\epsilon,1-\epsilon]$ of the solution of $F_j'(u)=\mu$, where $\mu$ is the multiplier of the budget (if the unconstrained root lies outside the box, $F_j'$ is monotone so the constraint that is violated is active). Solving $F_j'(u)=\mu$ gives $(I_j+\tfrac1\eta)\logit u=\tfrac1\eta\logit q^{(n)}_j-\mu$, i.e.\ the displayed formula. Each $q_j(\mu)$ is continuous and nonincreasing in $\mu$, with $\sum_jq_j(\mu)\to(1-\epsilon)J\ge\rho J$ as $\mu\to-\infty$ and $\to\epsilon J\le\rho J$ as $\mu\to\infty$, so a root exists; uniqueness of the maximiser makes any choice among tied roots immaterial.
\end{proof}

\begin{corollary}[Behaviour of the update]
\label{cor:adapt}
\begin{enumerate}
\item[(i)] Spans with $I_j=0$ receive a common shift of their log-odds, $\logit q_j^{(n+1)}=\logit q_j^{(n)}-\eta\mu$: their relative odds are preserved and they absorb whatever budget the influential spans release.
\item[(ii)] Before clipping and up to the common shift, the log-odds of span $j$ are contracted toward $0$ (i.e.\ $q_j$ toward $\tfrac12$) by the factor $1/(1+\eta I_j)$; if the budget constraint is inactive ($\mu=0$) and $I_j$ is held fixed across visits, $q_j^{(n)}\to\tfrac12$ geometrically at rate $1/(1+\eta I_j)$; small $\eta$ yields the proximity behaviour of Section~2.3.
\end{enumerate}
\end{corollary}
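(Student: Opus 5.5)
Both parts follow directly from the closed form in Proposition~\ref{prop:adapt}, so the plan is to read the formula in log-odds coordinates and specialise it. Before clipping, Proposition~\ref{prop:adapt} gives
\[
\logit q^{(n+1)}_j=\frac{\logit q^{(n)}_j-\eta\mu}{1+\eta I_j},
\]
where $\mu$ is the single budget multiplier shared by all spans. This is an affine map in $\logit q^{(n)}_j$ with slope $1/(1+\eta I_j)$ and offset $-\eta\mu/(1+\eta I_j)$. Everything else is bookkeeping about this map.

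For (i), set $I_j=0$. The map becomes $\logit q^{(n+1)}_j=\logit q^{(n)}_j-\eta\mu$, which is the same translation for every span with zero priority. The difference of log-odds between any two such spans is therefore unchanged, so their relative odds are preserved. For the budget-absorption statement I would use the monotonicity already shown in the proof of Proposition~\ref{prop:adapt}: each $q_j(\mu)$ is nonincreasing in $\mu$, and $\mu$ is fixed by $\sum_j q_j=\rho J$. So if the influential spans move toward $\tfrac12$ and thereby change their total mass, $\mu$ adjusts, and the common shift $-\eta\mu$ on the $I_j=0$ spans compensates. I will state this qualitatively, as the corollary does, and note that clipping may bind for some of these spans.

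For (ii), I would rewrite the map as $\tfrac{1}{1+\eta I_j}\logit q^{(n)}_j$ plus a term depending only on $\mu$. This term is common to all spans up to the factor $1/(1+\eta I_j)$, and the phrase ``up to the common shift'' in the statement is meant to cover exactly this. The contraction of log-odds toward $0$ by the factor $1/(1+\eta I_j)$ is then immediate. With $\mu=0$ and $I_j\equiv I$ fixed, iterating gives $\logit q^{(n)}_j=(1+\eta I)^{-n}\logit q^{(0)}_j\to0$, so $q^{(n)}_j\to\tfrac12$ by continuity of $\sigma$. The convergence is geometric in log-odds, and hence also in $q$ because $\sigma$ is $\tfrac14$-Lipschitz.

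Clipping does not interfere with this limit. Since $\epsilon\le\tfrac12\le1-\epsilon$, the clip is inactive once the log-odds lie within $[\logit\epsilon,\logit(1-\epsilon)]$, and clipping onto an interval containing $0$ can only move log-odds closer to $0$. For small $\eta$, the factor $1/(1+\eta I_j)=1-\eta I_j+O(\eta^2)$ makes each update a perturbation of order $\eta$ of the identity. This recovers the proximity behaviour induced by the KL term.

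The only step needing real care is (i)'s claim that these spans ``absorb whatever budget the influential spans release.'' The direction of $\mu$ depends on whether the influential spans start above or below $\tfrac12$. I would therefore phrase this part as the budget identity enforced through the single multiplier, not as a signed claim.
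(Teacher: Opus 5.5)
Your proposal is correct and follows the paper's own route: the paper proves this corollary simply by reading off the affine log-odds map $\logit q_j\mapsto(\logit q_j-\eta\mu)/(1+\eta I_j)$ from Proposition~\ref{prop:adapt}, which is exactly what you do for (i) and (ii). Your additions are sound refinements: clipping onto an interval containing $0$ only pulls log-odds toward $0$, the $\tfrac14$-Lipschitz bound transfers geometric convergence from log-odds to $q$, and the sign of the budget transfer in (i) depends on whether the influential spans sit above or below $\tfrac12$. One small point: the geometric convergence in (ii) needs $I_j>0$, since the rate is $1$ when $I_j=0$.
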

\begin{proof}
Read off the affine map on logits in Proposition~\ref{prop:adapt}.
\end{proof}

The entropy reward also serves the attribution: in Proposition~\ref{prop:shap} the regression column for span $j$ has variance $q_j(1-q_j)$, which, like $H_{\mathrm{Ber}}(q_j)$, peaks at $\tfrac12$ and vanishes at the box boundary. Moving an influential span toward $\tfrac12$ therefore maximises the information future masks carry about $\phi_j$, and the box $[\epsilon,1-\epsilon]$ is exactly what keeps every span identifiable in Eq.~(9).

All results above use only that masks are i.i.d.\ from $q$ within an update, that $p_0$ and $\pS$ are fixed given the state (teachers are detached), and that $\epsilon\le q_j\le1-\epsilon$. None requires the coalition model to be additive or $\pbar$ to coincide with a marginal over genuinely distinct solutions; Proposition~\ref{prop:decomp} characterises what the mask marginal removes without claiming more. The finite-$K$ guarantees are conditional on a state, and with the small $K$ used in practice the regularisers $\varepsilon,\tau$ and the clipping in Eq.~(5) are what keep the plug-in quantities stable.

\section{Additional Experimental Results}
\label{app:additional_experiments}
\subsection{Sensitivity Analysis of the Masking Probability Clipping Threshold}

The adaptive masking component in MASS updates the masking probability of each
reference span based on its estimated contribution. To prevent the masking
probabilities from collapsing to deterministic states, we constrain each span
masking probability $q_j$ within a bounded range:
\[
\epsilon \leq q_j \leq 1-\epsilon .
\]
Here, $\epsilon$ controls the degree of clipping applied during adaptive
masking. A smaller value allows more aggressive adaptation but may introduce
instability due to noisy Shapley-based attribution estimates, while a larger
value provides stronger regularization but can limit the flexibility of span
selection. We investigate the effect of this hyperparameter by evaluating
MASS with four different clipping thresholds: $\epsilon \in \{0.01, 0.03, 0.05,
0.10\}$ while keeping all other configurations unchanged.

\begin{table}[H]
\centering
\caption{\textbf{Sensitivity analysis of the masking probability clipping} We report Avg@12 performance on AIME 2025. All experiments use the same MASS training
configuration except for the clipping threshold.}
\label{tab:epsilon_ablation}
\begin{tabular}{c|cccc}
\toprule
Model & $\epsilon=0.01$ & $\epsilon=0.03$ & $\epsilon=0.05$ & $\epsilon=0.10$ \\
\midrule
Qwen3-8B & 79.08 & 79.44 & 79.17 & 78.61 \\
OLMo-7B  & 74.62 & 75.00 & 74.72 & 73.89 \\
\bottomrule
\end{tabular}
\end{table}

As shown in Table~\ref{tab:epsilon_ablation}, MASS achieves the best
performance with $\epsilon=0.03$ on both Qwen3-8B and OLMo-7B. When using a
smaller clipping threshold ($\epsilon=0.01$), the performance slightly
decreases, suggesting that overly flexible masking adaptation may amplify
noise in the estimated span importance and lead to less stable supervision.
Increasing the threshold to $\epsilon=0.05$ results in a minor degradation,
while a more aggressive constraint ($\epsilon=0.10$) causes a larger drop,
indicating that excessive clipping restricts the ability of adaptive masking
to focus on influential reasoning spans. Overall, the results suggest that a
moderate clipping threshold provides a suitable balance between exploration of
alternative masked views and stable adaptive masking.

\subsection{Effect of the KL Divergence Direction}

MASS constructs the distillation target by averaging predictions from multiple
masked teacher views. In the default formulation, we optimize the student with
a forward KL objective:
\[
\mathcal{L}_{\mathrm{FKL}}
=
\frac{1}{T} 
\sum_{t=1}^{T}\mathrm{sg}(\widehat{w}_t)
\mathrm{KL}
(\operatorname{sg}(\hat{p}_{t}) \Vert p_{S,t}),
\]
where $\hat{p}_{t}$ denotes the aggregated masked teacher distribution and
$\operatorname{sg}(\cdot)$ indicates stop-gradient. We investigate whether the
choice of KL direction affects the effectiveness of MASS by comparing forward
KL with the reverse direction:
\[
\mathcal{L}_{\mathrm{BKL}}
=
\frac{1}{T}
\sum_{t=1}^{T} \mathrm{sg}(\widehat{w}_t)
\mathrm{KL}
(p_{S,t}\Vert \operatorname{sg}(\hat{p}_{t})).
\]
All experiments are conducted on Qwen3-4B with the same training setup, and
performance is evaluated on AIME 2025 and AIME 2024 using Avg@12 accuracy.

\begin{table}[H]
\centering
\caption{Effect of KL divergence direction on Qwen3-4B. Results are reported
as Avg@12 accuracy on AIME 2024 and AIME 2025.}
\label{tab:kl_direction}
\begin{tabular}{c|cc}
\toprule
Distillation Objective & AIME24 & AIME25 \\
\midrule
Backward KL & 79.28 & 72.50 \\
Forward KL (MASS) & \textbf{81.11} & \textbf{73.89} \\
\bottomrule
\end{tabular}
\end{table}

As shown in Table~\ref{tab:kl_direction}, forward KL achieves better
performance than backward KL under the MASS framework. We hypothesize that this
difference comes from the different optimization behaviors of the two
objectives. The forward KL objective encourages the student to preserve the
support of the aggregated teacher distribution, which is beneficial when the
teacher target combines complementary reasoning trajectories from multiple
masked views. In contrast, backward KL is more mode-seeking and encourages the
student to concentrate on high-probability regions of the teacher distribution,
which may reduce the diversity preserved by the multi-mask aggregation. These
results support the use of forward KL for transferring the broader supervision
signal constructed by MASS.

\subsection{Prompt Example}
Figure~\ref{fig:prompt_comparison} provides a concrete example of the teacher prompts used in OPSD and MASS to clarify how the two methods differ at the level of privileged conditioning. In both settings, all prompt components other than the reference solution are kept identical, so that the difference in teacher behavior is attributable to how the reference information is exposed. OPSD conditions the teacher on the complete reference solution, whereas MASS constructs multiple teacher views by independently masking spans of the same reference solution; the figure shows one such masked view for illustration. The labels \texttt{[S1]}--\texttt{[S5]} and the surrounding braces are included only in the figure to make the span boundaries and masking units explicit. These span identifiers are \emph{not} included in the actual teacher prompts used during training; in practice, the teacher receives only the original reference text with selected spans replaced by \texttt{[MASK]}.
\begin{figure*}[t]
    \centering
    \includegraphics[width=0.75\textwidth]{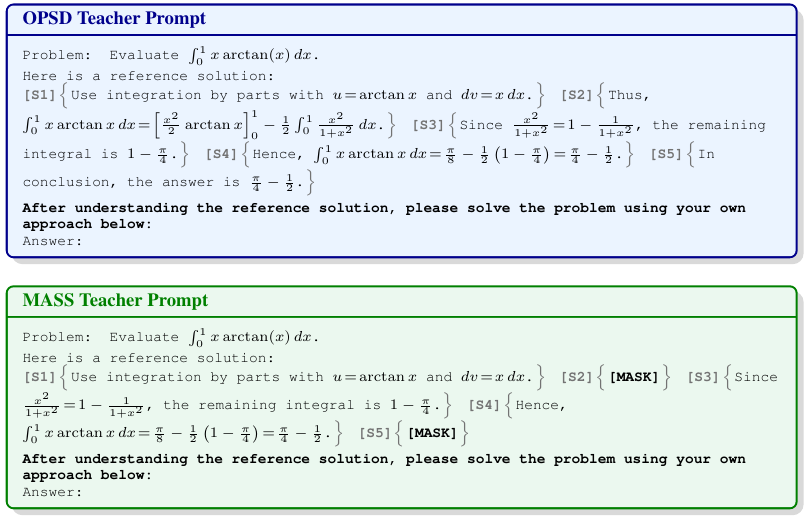}
    \caption{\textbf{Teacher prompts in OPSD and MASS.}
OPSD uses the complete reference solution, while MASS constructs multiple teacher views from independently masked versions of the same reference; one example is shown. Labels \texttt{[S1]}--\texttt{[S5]} and the surrounding braces are added only for visualization to clarify the span boundaries and are not included in the actual prompts. During training, selected spans are directly replaced by \texttt{[MASK]}.}
    \label{fig:prompt_comparison}
\end{figure*}

\subsection{Computational Cost Analysis}
We report the estimated training floating point operations (FLOPs) per optimization step to compare the computational overhead of different training paradigms. We use the standard approximation of $6N$ FLOPs per token for transformer training, where $N$ denotes the number of model parameters. The cost of rollout generation, teacher inference, and student optimization are counted separately.

All methods are estimated under the same Qwen3-4B setting with a batch size of 32 and a maximum response length of 1024 tokens. For MASS, we use $K=8$ masked teacher routes. FLOPs are normalized by OPSD.

\begin{table}[H]
\centering
\caption{Estimated training computation per optimization step. FLOPs are normalized by OPSD.}
\label{tab:flops}
\begin{tabular}{lcc}
\toprule
Method & FLOPs / Step & Relative to OPSD \\
\midrule
OPSD & $1.31 \times 10^{15}$ & $1.00\times$ \\
MASS & $3.41 \times 10^{15}$ & $2.60\times$ \\
RLSD & $8.39 \times 10^{15}$ & $6.40\times$ \\
\bottomrule
\end{tabular}
\end{table}

\subsection{Masking Attribution Heatmap}
\begin{figure*}[!t]
    \centering
    \includegraphics[width=0.9\textwidth]{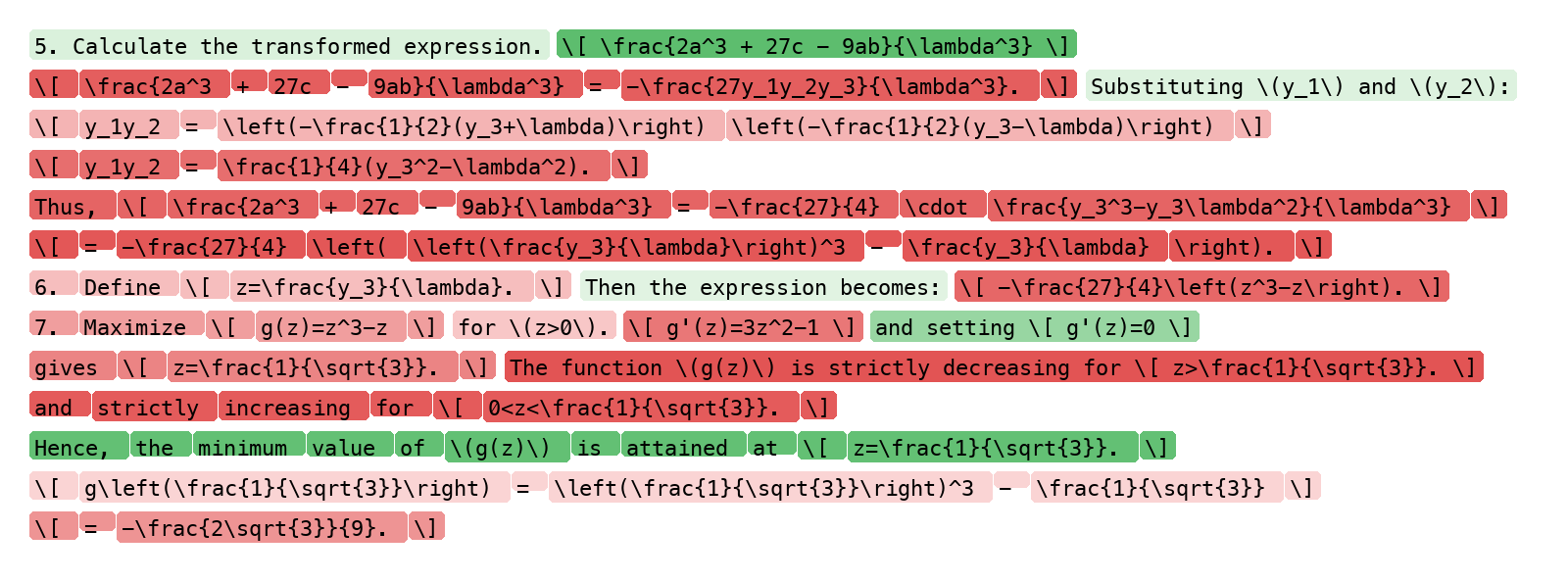}
    \caption{Normalized span-level masking attribution Heatmap }
    \label{fig:heatmap}
\end{figure*}

Reference spans are colored according to their normalized positive Shapley attribution, from green (low) through red (high). Higher attribution indicates that masking the span produces a teacher-distribution change more strongly aligned with the current multi-mask correction.

The system assigns high attribution to the algebraic transformations that reduce the original expression to a single-variable optimization problem, while transition phrases and recoverable intermediate statements receive lower attribution.

\section{Experiment Details}
\label{experimentdetail}
\subsection{Span Segmentation and Masking}
MASS requires partitioning each reference solution $R$ into a sequence of
contiguous spans before applying independent span masking. In our
implementation, we use a rule-based segmentation strategy based on
structural boundaries in the reference text.

Specifically, we identify segmentation boundaries using punctuation marks,
LaTeX delimiters, and formatting symbols that naturally separate reasoning
units. The boundary candidates include sentence-level punctuation
(e.g., periods, question marks, exclamation marks, semicolons, and colons),
mathematical delimiters (e.g., ``\$'', ``\textbackslash(
\textbackslash)'', and ``\textbackslash[
\textbackslash]''),
and LaTeX line-breaking commands. Tokens between two consecutive
boundaries are grouped into one span.

After obtaining the span sequence
$R=[s_1,s_2,\ldots,s_J]$, we perform span-level masking by replacing the
selected spans with a special mask token. Specifically, for a sampled
binary mask vector $m\in\{0,1\}^{J}$, each span $s_j$ is replaced by
``[MASK]'' if $m_j=1$, while the remaining spans are kept unchanged.
The resulting masked reference is then used as the teacher input for
constructing different teacher views.

\subsection{Method Hyperparameters}

We summarize the hyperparameters used in MASS in
Table~\ref{tab:method_hyperparameters}. Here, $K$ is the number of masked
teacher routes, $\rho$ is the expected masking ratio, $\lambda$ controls the
anchor-relative signal, and $\varepsilon$ is the numerical stabilizer in the
selective weighting.

For adaptive masking, $\tau$ regularizes the Shapley regression, $\epsilon$
bounds the masking probabilities away from $0$ and $1$, and $\eta$ controls
the masking-distribution update rate. We maintain the historical attribution
estimate $z$ using an exponential moving average,
\begin{equation}
z^{(n+1)}
=
\gamma_{\mathrm{EMA}} z^{(n)}
+
(1-\gamma_{\mathrm{EMA}})\widehat{\phi}^{(n)},
\end{equation}
where $\gamma_{\mathrm{EMA}}$ controls the smoothing of span attributions
across repeated visits. Unless otherwise specified, we use the same
hyperparameters across all experiments.

\begin{table}[H]
\centering
\caption{Method hyperparameters used in MASS.}
\label{tab:method_hyperparameters}
\small
\setlength{\tabcolsep}{2.5pt}
\renewcommand{\arraystretch}{0.8}
\begin{tabular}{lc}
\toprule
Hyperparameter & Value \\
\midrule
Number of teacher routes ($K$) & 8 \\
Masking ratio ($\rho$) & 0.3 \\
Anchor coefficient ($\lambda$) & 0.5 \\
Weight regularization ($\varepsilon$) & $1\times10^{-6}$ \\
Shapley regression regularization ($\tau$) & 0.1 \\
EMA smoothing ($\gamma_{\mathrm{EMA}}$) & 0.9 \\
Mask probability lower bound ($\epsilon$) & 0.03 \\
Mask adaptation rate ($\eta$) & 0.5 \\
\bottomrule
\end{tabular}
\end{table}

We also provide the training configuration used in our experiments in
Table~\ref{tab:training_configuration}. The evaluation settings are summarized in
Table~\ref{tab:evaluation_configuration}.

\begin{table}[H]
\centering
\caption{Training configuration used in MASS.}
\label{tab:training_configuration}
\begin{tabular}{lc}
\toprule
Training Parameter & Value \\
\midrule
Learning rate & $5\times10^{-6}$ \\
Effective batch size & 32 \\
Maximum sequence length & 1024 tokens \\
Training steps & 250 \\
Number of Generations Per Prompt & 1 \\
Sampling temperature & 1.1 \\
\bottomrule
\end{tabular}
\end{table}

\begin{table}[H]
\centering
\caption{Method hyperparameters used in MASS.}
\label{tab:method_hyperparameters}
\begin{tabular}{lc}
\toprule
Hyperparameter & Value \\
\midrule
Number of teacher routes ($K$) & 8 \\
Masking ratio ($\rho$) & 0.3 \\
Anchor coefficient ($\lambda$) & 0.5 \\
Weight regularization ($\varepsilon$) & $1\times10^{-6}$ \\
Shapley regression regularization ($\tau$) & 0.1 \\
EMA smoothing ($\gamma_{\mathrm{EMA}}$) & 0.9 \\
Mask probability lower bound ($\epsilon$) & 0.03 \\
Mask adaptation rate ($\eta$) & 0.5 \\
\bottomrule
\end{tabular}
\end{table}

\begin{table}[H]
\centering
\caption{Evaluation configuration used in MASS.}
\label{tab:evaluation_configuration}
\begin{tabular}{lc}
\toprule
Evaluation Parameter & Value \\
\midrule
Maximum new tokens & 38912 \\
Thinking mode & Enabled \\
Sampling temperature & 1.0 \\
Top-$p$ & 0.95 \\
Top-$k$ & -1 \\
Min-$p$ & 0.0 \\
Presence penalty & 0.0 \\
Samples per prompt & 12 \\
\bottomrule
\end{tabular}
\end{table}

\end{document}